\theoremstyle{plain}
\newtheorem{lem}{Lemma}
\newtheorem{cor}{Corollary}
\newtheorem{thm}{Theorem}
\begin{document}
\title{Generalized Ambiguity Decomposition for Understanding Ensemble Diversity}
\author{Kartik Audhkhasi$^1$, Abhinav Sethy$^2$,\\	Bhuvana Ramabhadran$^2$, Shrikanth S. Narayanan$^1$\\ \vspace{75pt}
$^1$Signal Analysis and Interpretation Lab (SAIL)\\Electrical Engineering Department\\University of Southern California, Los Angeles, USA\\Email: audhkhas@usc.edu, shri@sipi.usc.edu\\ \vspace{75pt}
$^2$IBM T. J. Watson Research Center\\ Yorktown Heights, New York, USA\\Email: \{asethy, bhuvana\}@us.ibm.com}

\maketitle
\newpage
\begin{abstract}
Diversity or complementarity of experts in ensemble pattern recognition and information processing systems is widely-observed by researchers to be crucial for achieving performance improvement upon fusion. Understanding this link between
ensemble diversity and fusion performance is thus an important research question. However, prior works
have theoretically characterized ensemble diversity and have linked it with ensemble performance in very restricted settings.
We present a generalized ambiguity decomposition (GAD) theorem as a broad framework for answering these questions. The GAD theorem applies to a generic convex ensemble of experts for any arbitrary twice-differentiable loss function. It shows that the ensemble performance approximately decomposes into
a difference of the average expert performance and the diversity of the ensemble. It thus provides a theoretical explanation for the empirically-observed benefit of fusing outputs from diverse classifiers and regressors. It also provides a loss function-dependent, ensemble-dependent, and data-dependent definition of diversity. We present extensions of this decomposition to common regression and classification loss functions, and report a simulation-based analysis of the diversity term and the accuracy of the decomposition. We finally present experiments on standard pattern recognition data sets which indicate the accuracy of the decomposition for real-world classification and regression problems.

\textbf{Index Terms:} Multiple Experts, Multiple Classifier Systems, Ensemble Methods, Diversity, System Combination, Loss Function, Statistical Learning Theory.
\end{abstract}

\section{Introduction}
Researchers across several fields have empirically observed that an ensemble of multiple experts (classifiers or regressors) performs better than a single expert. Well-known examples demonstrating this performance benefit span a large variety of applications such as
\begin{itemize}
	\item \emph{Automatic speech and language processing}: Most teams in large-scale projects involving automatic speech recognition and processing such as the DARPA GALE~\cite{ibmgale2010}, CALO~\cite{tur2010calo}, RATS~\cite{mangu2013exploiting} and the IARPA BABEL~\cite{cui2013developing} programs use a combination of multiple systems for achieving state-of-the-art performance. The use of multiple systems is also widespread in text and natural language processing applications, with examples ranging from parsing~\cite{sagae2007dependency} to text categorization~\cite{sebastiani2002machine}.
	\item \emph{Recommendation systems}: Many industrial and academic teams competing for the Netflix prize~\cite{netflix2007} used ensembles of diverse systems for the movie rating prediction task. The \$1 Million grand prize winning system from the team \emph{BellKor's Pragmatic Chaos} was composed of multiple systems from three independent teams (\emph{BellKor~\cite{bell2007bellkor}, Pragmatic Theory~\cite{piotte2009pragmatic}, } and \emph{BigChaos~\cite{toscher2009bigchaos}}).
	\item \emph{Web information retrieval}: Researchers have also used ensembles of diverse systems for information retrieval tasks from the web. For instance, the winning teams~\cite{burges2011learning, pavlov2010bagboo} in the Yahoo! Learning to Rank Challenge~\cite{yahoo_ltr} used ensemble methods (such as bagging, boosting, random forests, and lambda-gradient models) for improving document ranking performance. The challenge overview article~\cite{yahoo_ltr} also emphasizes the performance benefits obtained by all teams when using ensemble methods.
	\item \emph{Computer vision}: Ensemble methods are often popular in computer vision tasks as well, such as tracking~\cite{avidan2007ensemble}, object detection~\cite{malisiewicz2011ensemble}, and pose estimation~\cite{gutta2000mixture}.
	\item \emph{Human state pattern recognition}: Systems for multimodal physical activity detection~\cite{li2010multimodal} often fuse classifiers trained on different feature sets for achieving an improvement in accuracy. Several teams competing in the Interspeech challenges have also used ensembles for classification of human emotion~\cite{schuller2009interspeech}, age and gender~\cite{schuller2010interspeech}, intoxication and sleepiness~\cite{schuller2011interspeech}, personality, likability, and pathology~\cite{schuller2012interspeech}, and social signals~\cite{schuller2013interspeech}.
\end{itemize}


 The above list is only a small fraction of the large number of applications which have used ensembles of multiple systems. Dietterich~\cite{dietterich2000} offers three main reasons for the observed benefits of an ensemble. First, an ensemble can potentially have a lower generalization error than a single expert. Second, the parameter estimation involved in training most state of the art expert systems such as neural networks involves solving a non-convex optimization problem. A single expert can get stuck in local optima whereas an ensemble of multiple experts can provide parameter estimates closer to the global optima. Finally, the true underlying function for a problem at hand may be too complex for a single expert and an ensemble may be better able to approximate it.


Intuition and documented research such as the ones listed above suggests that the experts in the ensemble should be optimally diverse. Diversity acts as a hedge against uncertainty in the evaluation data set, and the mismatch between the loss functions used for training and evaluation. Kuncheva~\cite{kuncheva2004book} gives a simple intuitive argument in favor of having the right amount of diversity in an ensemble. She says that just one expert suffices if all experts produce identical output, however, if the experts disagree in their outputs very frequently, it indicates that they are individually poor estimators of the target variable. Ambiguity decomposition~\cite{krogh1995} (AD) explains this tradeoff for the special case of the squared error loss function. Let $X \in \mathcal{X} \subseteq \mathcal{R}^D$ and $Y \in \mathcal{Y} \subseteq \mathcal{R}$ denote the $D$-dimensional input and $1$-dimensional target (output) random variables respectively. Let $f_k:\mathcal{X}\rightarrow\mathcal{R}$ be the $k^{th}$ expert which maps the input space $\mathcal{X}$ to the real line $\mathcal{R}$. $f$ is a convex combination of $K$ experts when
\begin{align}
 f(X) &= \sum_{k=1}^K w_k f_k(X) \quad \text{where } w_k \geq 0 \text{ and } \sum_{k=1}^K w_k = 1 \;.
\end{align}
AD states that the squared error between the above $f(X)$ and $Y$ is
\begin{align}\label{eq:amb_decomp}
 [Y-f(X)]^2 &= \sum_{k=1}^K w_k [Y-f_{k}(X)]^2 - \sum_{k=1}^K w_k [f_{k}(X)-f(X)]^2 \;.
\end{align}
The first term on the right hand side is the weighted squared error of the individual experts with respect to $Y$. The second term quantifies the diversity of the ensemble and is the squared error spread of the experts around $f(X)$. For two ensembles with identical weighted squared error, one with a greater diversity will have a lower overall squared error. The bias-variance-covariance decomposition is an equivalent result by Ueda and Nakano~\cite{ueda1996generalization} with neural network ensembles as the focus. We note that an ensemble of neural networks often consists of almost equally-accurate but diverse networks due to the non-convex training optimization problem. AD is also related to the bias-variance decomposition (BVD)~\cite{geman1992} which says that the expected squared error between a regressor $f_{\mathcal{D}}(X)$ trained on dataset $\mathcal{D}$ and the target variable $Y$ is
\begin{align}\label{eq:bvd_mse}
 \mathbb{E}_{\mathcal{D}}\{[f_{\mathcal{D}}(X) - Y]^2\} &= [Y - \mathbb{E}_{\mathcal{D}}\{f_{\mathcal{D}}(X)\}]^2 + \mathbb{E}_{\mathcal{D}}\{[f_{\mathcal{D}}(X) - \mathbb{E}_{\mathcal{D}}\{f_{\mathcal{D}}(X)\}]^2\} \;.
\end{align}
The first term on the right hand side is the square of the bias, which is the difference between the target $Y$ and the expected prediction over the distribution of $\mathcal{D}$. The second term measures the variance of the ensemble.
BVD reduces to AD when experts have the same functional form (e.g., linear) and when the training set $\mathcal{D}$ is drawn from a convex mixture of training sets $\{\mathcal{D}_k\}_{k=1}^K$ with mixture weights $\{w_k\}_{k=1}^K$.

Many existing algorithms attempt to promote diversity while training an ensemble. Examples include ensembles of decision trees~\cite{breiman2001random}, support vector machines~\cite{valentini2004bias}, conditional maximum entropy models~\cite{audhkhasi2012creating}, negative correlation learning~\cite{liu1999ensemble} for neural networks and DECORATE~\cite{melville2003constructing}, which is a meta-algorithm based on generation of synthetic data. AdaBoost~\cite{freund1995decision} is another prominent algorithm which incrementally creates a diverse ensemble of weak experts by modifying the distribution from which training instances are sampled. However, only few studies have focused on understanding the impact of diversity on ensemble performance for both classifiers and regressors. AD provides this link only for least squares regression. The analysis presented by Tumer and Ghosh~\cite{tumer1996analysis} assumes classification as regression over class posterior distribution. 

This paper presents a generalized ambiguity decomposition (GAD) theorem that is applicable to both classification and regression. It does not assume that the classifier is estimating a posterior distribution over the label set $\mathcal{Y}$ in case of classification. This is often encountered in practice, for example in case of support vector machines.
We note that some prior work has been done for deriving a BVD for a single expert with different loss functions~\cite{kong1995error,breiman1996arcing,domingos2000unified}. The proposed GAD theorem is different. It focuses on a convex combination of experts rather than a single expert. Even though one can link the BVD to AD by considering a mixture of training sets as mentioned before, this link requires that the individual experts should have the same functional form. We do not make such assumptions. Our result applies pointwise for any given $(X,Y) \in \mathcal{X}\times\mathcal{Y}$ rather than relying on an ensemble average.

We present the GAD theorem and its proof in the next section. We derive the decomposition for some common regression and classification loss functions in Section~\ref{sec:comm_loss}. We present a simulation-based analysis in Section~\ref{sec:sim}. We then evaluate the presented decomposition on multiple standard classification and regression data sets in Section~\ref{sec:expts}. Section~\ref{sec:concl} presents the conclusion and some directions for future work.

\section{Generalized Ambiguity Decomposition (GAD) Theorem}\label{sec:gad}
The concept of a loss function is central to statistical learning theory~\cite{vapnik1999nature}. It computes the mismatch between the prediction of an expert and the true target value. Lemma~\ref{lem:loss_bound} below presents useful bounds on a class of loss functions which are used widely in supervised machine learning.

\begin{lem}{\bf (Taylor's Theorem for Loss Functions~\cite{korn})}\label{lem:loss_bound}
Let $x, Y \in \mathcal{R}$ and $\mathcal{B} \subseteq \mathcal{R}$ be a closed and bounded set containing $x$. Let $l:\mathcal{R}\times\mathcal{R}\rightarrow\mathcal{R}$ be a loss function which is twice-differentiable in its second argument with continuous second derivative over $\mathcal{B}$. Let
\begin{align}
 M_{l,\mathcal{B}}(Y) &= \sup_{z \in \mathcal{B}} l''(Y,z) < \infty \quad\text{and}\\
 m_{l,\mathcal{B}}(Y) &= \inf_{z \in \mathcal{B}} l''(Y,z) > -\infty \;.
\end{align}
Then for any $Y_0 \in \mathcal{B}$, we can write the following quadratic bounds on the loss function:
\begin{align}
 l(Y,Y_0) &\geq l(Y,x) + l'(Y,x) (Y_0-x) + \frac{m_{l,\mathcal{B}}(Y)}{2} (Y_0-x)^2 \quad\text{and} \\
 l(Y,Y_0) &\leq l(Y,x) + l'(Y,x) (Y_0-x) + \frac{M_{l,\mathcal{B}}(Y)}{2} (Y_0-x)^2 \;.
\end{align}
\end{lem}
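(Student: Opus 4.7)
The statement is essentially Taylor's theorem with Lagrange remainder, applied to the map $z \mapsto l(Y, z)$ for fixed $Y$, together with the observation that the remainder term is controlled by the extreme values of the second derivative on $\mathcal{B}$. The plan is to reduce the bivariate statement to a familiar univariate Taylor expansion, then use the sign of $(Y_0 - x)^2$ to convert a two-sided remainder estimate into one-sided quadratic upper and lower bounds.

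First, I would fix $Y \in \mathcal{R}$ and introduce $g : \mathcal{B} \to \mathcal{R}$ defined by $g(z) = l(Y, z)$. By hypothesis, $g$ is twice continuously differentiable on $\mathcal{B}$, with $g'(z) = l'(Y, z)$ and $g''(z) = l''(Y, z)$. For $x, Y_0 \in \mathcal{B}$, I would invoke Taylor's theorem with the Lagrange form of the remainder (treating $\mathcal{B}$ as an interval so that the segment between $x$ and $Y_0$ is contained in $\mathcal{B}$) to obtain some $\xi$ strictly between $x$ and $Y_0$ such that
\begin{align}
g(Y_0) &= g(x) + g'(x)(Y_0 - x) + \frac{1}{2} g''(\xi) (Y_0 - x)^2.
\end{align}
Rewriting in terms of $l$, this is exactly $l(Y, Y_0) = l(Y, x) + l'(Y, x)(Y_0 - x) + \tfrac{1}{2} l''(Y, \xi)(Y_0 - x)^2$.

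Next, I would use that $\xi \in \mathcal{B}$, so by the definitions of $m_{l,\mathcal{B}}(Y)$ and $M_{l,\mathcal{B}}(Y)$ we have $m_{l,\mathcal{B}}(Y) \leq l''(Y, \xi) \leq M_{l,\mathcal{B}}(Y)$. Multiplying by the non-negative quantity $\tfrac{1}{2}(Y_0 - x)^2$ preserves the inequalities, yielding
\begin{align}
\frac{m_{l,\mathcal{B}}(Y)}{2}(Y_0 - x)^2 \;\leq\; \frac{1}{2} l''(Y, \xi)(Y_0 - x)^2 \;\leq\; \frac{M_{l,\mathcal{B}}(Y)}{2}(Y_0 - x)^2.
\end{align}
Substituting the Lagrange remainder back and using these two bounds produces the claimed lower and upper quadratic estimates on $l(Y, Y_0)$.

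The only genuine subtlety, and the step I would be most careful about, is the implicit assumption that the segment $[\min(x, Y_0), \max(x, Y_0)]$ lies inside $\mathcal{B}$, which is needed both for Taylor's theorem and for $\xi \in \mathcal{B}$ so that the sup/inf bounds apply at $\xi$. The hypothesis that $\mathcal{B}$ is closed and bounded is compatible with taking $\mathcal{B}$ to be a closed interval, which is the natural reading here since the sup and inf of $l''(Y, \cdot)$ are taken over $\mathcal{B}$; one line noting convexity (or explicitly restricting to a closed interval containing $x$ and $Y_0$) suffices to legitimize the remainder step. Everything else is a direct rewriting of a one-variable Taylor expansion.
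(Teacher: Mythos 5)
Your proposal is correct and follows essentially the same route as the paper's own proof: Taylor's theorem with the Lagrange (mean-value) form of the remainder applied to $z \mapsto l(Y,z)$, followed by bounding $l''(Y,\xi)$ between $m_{l,\mathcal{B}}(Y)$ and $M_{l,\mathcal{B}}(Y)$ and multiplying by the non-negative factor $\tfrac{1}{2}(Y_0-x)^2$. Your remark that $\mathcal{B}$ must effectively be an interval so that the intermediate point $\xi$ lies in $\mathcal{B}$ is a point the paper leaves implicit (it is harmless there, since the set $\mathcal{B}$ used later in the GAD theorem is explicitly the interval $[b_{\min},b_{\max}]$).
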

\begin{proof}
 Since $l(Y,Y_0)$ is twice-differentiable in its second argument over $\mathcal{R}\times\mathcal{B}$, by Taylor's theorem~\cite{korn}, $\exists$ a function $h_2:\mathcal{R}\times\mathcal{B}\rightarrow\mathcal{R}$ such that
\begin{align}\label{eq:taylor_thm}
 l(Y,Y_0) &= l(Y,x) + l'(Y,x) (Y_0-x) + h_2(Y,Y_0)(Y_0-x)^2 
\quad\text{where } \lim_{Y_0\rightarrow x} h_2(Y,Y_0) = 0
\end{align}
 for any given $x \in \mathcal{B}$. $h_2(Y,Y_0)(Y_0-x)^2$ is called remainder or residue and has the following form due to the Mean Value Theorem~\cite{korn}:
\begin{align}
 h_2(Y,Y_0)(Y_0-x)^2 &= \frac{l''(Y,z)}{2} (Y_0-x)^2 
\quad\text{where } z \in (Y_0,x) \;.
\end{align}
The second derivative of the loss function is continuous over the closed and bounded set $\mathcal{B}$. Weierstrass' Extreme Value Theorem~\cite{korn} gives:
\begin{align}
 m_{l,\mathcal{B}}(Y) &\leq l''(Y,z) \leq M_{l,\mathcal{B}}(Y) \quad \forall z \in \mathcal{B} \label{eq:2der_bounds} \\
 \text{where } & m_{l,\mathcal{B}}(Y) = \inf_{z \in \mathcal{B}} l''(Y,z) > -\infty\\
 \text{and } & M_{l,\mathcal{B}}(Y) = \sup_{z \in \mathcal{B}} l''(Y,z) < \infty \;.
\end{align}
We note that $m_{l,\mathcal{B}}(Y) = 0$ is an obvious choice if $l$ is convex. Using the bounds in (\ref{eq:2der_bounds}) in Taylor's theorem from (\ref{eq:taylor_thm}) results in the desired inequalities:
\begin{align}
 l(Y,Y_0) &\geq l(Y,x) + l'(Y,x) (Y_0-x) + \frac{m_{l,\mathcal{B}}(Y)}{2} (Y_0-x)^2 \\
 l(Y,Y_0) &\leq l(Y,x) + l'(Y,x) (Y_0-x) + \frac{M_{l,\mathcal{B}}(Y)}{2} (Y_0-x)^2 \;.
\end{align}
\end{proof}
The second argument of $l$ is always bounded in practice since it represents the prediction of the expert. Hence, limiting the domain of twice-differentiability and continuity of the second derivative from $\mathcal{R}\times\mathcal{R}$ to $\mathcal{R}\times\mathcal{B}$ is a reasonable assumption. The next lemma presents ambiguity decomposition for the squared error loss function~\cite{krogh1995}. We denote $f(X)$ as $f$ and $f_k(X)$ as $f_k$ from now on for notational simplicity.

\begin{lem}\label{lem:ad} {\bf (Ambiguity Decomposition (AD)~\cite{krogh1995})}\\
Consider an ensemble of $K$ experts $\{f_k:\mathcal{X}\rightarrow\mathcal{R}, k = 1,2,\ldots,K\}$ and let $f = \sum_{k=1}^K w_k f_k$ be a convex combination of these experts. Then
\begin{align}
 [Y-f]^2 &= \sum_{k=1}^K w_k [Y-f_{k}]^2 - \sum_{k=1}^K w_k [f_k-f]^2 \quad\forall (X,Y) \in \mathcal{X}\times\mathcal{R} \;.
\end{align}
\end{lem}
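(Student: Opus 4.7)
The plan is to prove the identity by direct algebraic expansion of the right-hand side, exploiting the two defining properties of a convex combination: $\sum_{k=1}^K w_k = 1$ and $\sum_{k=1}^K w_k f_k = f$. Unlike the generalized result to come later in the paper, this decomposition is an exact equality for the squared loss, so no appeal to Taylor's theorem (Lemma~\ref{lem:loss_bound}) is needed; everything follows from elementary algebra applied pointwise at each $(X,Y)$.

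First I would expand the weighted sum of individual errors. Writing $[Y-f_k]^2 = Y^2 - 2Yf_k + f_k^2$ and summing with weights $w_k$, the first term collapses to $Y^2$ using $\sum_k w_k = 1$, and the cross term collapses to $-2Yf$ using $\sum_k w_k f_k = f$, leaving
\begin{align}
\sum_{k=1}^K w_k [Y-f_k]^2 = Y^2 - 2Yf + \sum_{k=1}^K w_k f_k^2 \;.
\end{align}
Next I would expand the diversity term in the same way: $[f_k - f]^2 = f_k^2 - 2f f_k + f^2$, and summation with weights $w_k$ gives
\begin{align}
\sum_{k=1}^K w_k [f_k - f]^2 = \sum_{k=1}^K w_k f_k^2 - 2f \cdot f + f^2 = \sum_{k=1}^K w_k f_k^2 - f^2 \;,
\end{align}
where the middle step again uses $\sum_k w_k f_k = f$ and $\sum_k w_k = 1$.

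Finally I would subtract the second expansion from the first. The $\sum_k w_k f_k^2$ terms cancel exactly, leaving $Y^2 - 2Yf + f^2 = [Y-f]^2$, which is precisely the left-hand side. Since the statement is an exact algebraic identity rather than an inequality or an approximation, there is no real obstacle; the only point to watch is that both simplifications rely on $\sum_k w_k = 1$, which is why the convexity (as opposed to a general linear combination) is essential. The cancellation of $\sum_k w_k f_k^2$ is the structural reason the ambiguity term appears with a negative sign, foreshadowing why in the generalized setting one expects an analogous bias-minus-diversity structure.
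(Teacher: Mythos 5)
Your proof is correct and follows essentially the same elementary route as the paper: a direct algebraic expansion that uses only $\sum_k w_k = 1$ and $\sum_k w_k f_k = f$. The paper organizes the computation slightly differently, writing $Y - f_k = (Y-f) - (f_k-f)$ and letting the cross term vanish, whereas you expand into monomials and cancel $\sum_k w_k f_k^2$, but the two calculations are interchangeable.
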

\begin{proof}
 We start by expanding the following term:
 \begin{align}
  \sum_{k=1}^K w_k [Y-f_{k}]^2 &= \sum_{k=1}^K w_k [Y-f - (f_k-f)]^2 \\
  = \sum_{k=1}^K w_k [Y-f]^2 + \sum_{k=1}^K & w_k [f_k - f]^2 - 2\sum_{k=1}^K w_k [Y-f][f_k-f] \\
  = [Y-f]^2 + \sum_{k=1}^K w_k & [f_k - f]^2 - 2[Y-f]\sum_{k=1}^K w_k [f_k - f] \\
  = [Y-f]^2 + \sum_{k=1}^K w_k & [f_k - f]^2 - 2[Y-f] [\sum_{k=1}^K w_k f_k - f] \\
  = [Y-f]^2 + \sum_{k=1}^K w_k & [f_k - f]^2 \;.
 \end{align}
 We arrive at the Ambiguity Decomposition by re-arranging terms in the above equation.
\begin{align}
 [Y-f]^2 &= \sum_{k=1}^K w_k [Y-f_{k}]^2 - \sum_{k=1}^K w_k [f_k-f]^2 \;.
\end{align}
\end{proof}

Ambiguity decomposition describes the tradeoff between the accuracy of individual experts and the diversity of the ensemble. But it applies only to the squared error loss function. We now state and prove the Generalized Ambiguity Decomposition (GAD) theorem using Lemmas~\ref{lem:loss_bound} and \ref{lem:ad}.

\begin{thm}\label{thm:gad}{\bf (Generalized Ambiguity Decomposition (GAD) Theorem)}\\
Consider an ensemble of $K$ experts $\{f_k:\mathcal{X}\rightarrow\mathcal{R}, k = 1,2,\ldots,K\}$ and let $f = \sum_{k=1}^K w_k f_k$ be a convex combination of these experts. Assume that all $f_k$ are finite. Let $(X,Y) \in \mathcal{X}\times\mathcal{R}$ and let $\mathcal{B} \subseteq \mathcal{R}$ be the following closed and bounded set:
\begin{align}
 \mathcal{B} &= [b_\text{min},b_\text{max}] \quad\text{where} \\
 b_\text{min} &= \min\{Y,f_1,\ldots,f_K\} \quad\text{and} \\
 b_\text{max} &= \max\{Y,f_1,\ldots,f_K\} \;.
\end{align}
$\mathcal{B}$ is the smallest closed and bounded set which contains $Y$ and all $f_k$. Let $l:\mathcal{R}\times\mathcal{R}\rightarrow\mathcal{R}$ be a loss function which is twice-differentiable in its second argument with continuous second derivative over $\mathcal{B}$. Let:
\begin{align}
 M_{l,\mathcal{B}}(Y) &= \sup_{z \in \mathcal{B}} l''(Y,z) < \infty \;, \\
 M_{l,\mathcal{B}}(f) &= \sup_{z \in \mathcal{B}} l''(f,z) \in (0,\infty) \;, \quad\text{and} \\
 m_{l,\mathcal{B}}(Y) &= \inf_{z \in \mathcal{B}} l''(Y,z) > -\infty \;.
\end{align}
Then the ensemble loss is upper-bounded as given below:
\begin{align}
 l(Y,f) &\leq \sum_{k=1}^K w_k l(Y,f_k) - \frac{M_{l,\mathcal{B}}(Y)}{M_{l,\mathcal{B}}(f)} \Big{[}\sum_{k=1}^K w_k l(f,f_k) - l(f,f)\Big{]} \nonumber \\
 &+ \frac{1}{2}\Big{(}M_{l,\mathcal{B}}(Y) - m_{l,\mathcal{B}}(Y)\Big{)}\sum_{k=1}^K w_k (Y-f_k)^2 \;.
\end{align}
\end{thm}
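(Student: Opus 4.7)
The plan is to combine two applications of Lemma~\ref{lem:loss_bound} with the classical AD identity of Lemma~\ref{lem:ad} in a way that converts a squared-deviation diversity term into the loss-based diversity $\sum_k w_k l(f,f_k) - l(f,f)$ that appears in the target inequality. The key structural fact that makes everything align is $\sum_k w_k (f_k - f) = 0$, which holds because $f = \sum_k w_k f_k$; this kills all first-order Taylor terms after the weighted sum.

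First, I would apply the lower bound from Lemma~\ref{lem:loss_bound} to $l(Y, f_k)$, using $x = f$ and $Y_0 = f_k$ (both lie in $\mathcal{B}$ by construction), then take the $w_k$-weighted sum. The first-order term vanishes and rearranging yields
\begin{align*}
l(Y,f) \;\leq\; \sum_{k=1}^K w_k\, l(Y,f_k) - \frac{m_{l,\mathcal{B}}(Y)}{2}\sum_{k=1}^K w_k (f_k - f)^2.
\end{align*}
Second, I would apply the upper bound from Lemma~\ref{lem:loss_bound} to $l(f, f_k)$, again with $x = f$, and take the $w_k$-weighted sum. Once more the first-order term drops out, producing
\begin{align*}
\sum_{k=1}^K w_k (f_k - f)^2 \;\geq\; \frac{2}{M_{l,\mathcal{B}}(f)}\Bigl[\sum_{k=1}^K w_k\, l(f,f_k) - l(f,f)\Bigr].
\end{align*}
Third, by Lemma~\ref{lem:ad} applied to $Y$ and the $f_k$ we immediately get $\sum_k w_k (f_k - f)^2 \leq \sum_k w_k (Y - f_k)^2$.

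The main obstacle, and the real trick, is reconciling the coefficient $-m_{l,\mathcal{B}}(Y)/2$ in the first step with the coefficient $-M_{l,\mathcal{B}}(Y)/M_{l,\mathcal{B}}(f)$ demanded by the theorem. The idea is the algebraic split
\begin{align*}
-\frac{m_{l,\mathcal{B}}(Y)}{2}\sum_k w_k (f_k - f)^2 = -\frac{M_{l,\mathcal{B}}(Y)}{2}\sum_k w_k (f_k - f)^2 + \frac{M_{l,\mathcal{B}}(Y) - m_{l,\mathcal{B}}(Y)}{2}\sum_k w_k (f_k - f)^2.
\end{align*}
On the first piece I substitute the inequality from the second step (valid because $M_{l,\mathcal{B}}(Y) \geq 0$, which follows from $l''\geq m_{l,\mathcal{B}} \geq 0$ in the convex case that is implicit in the positivity of $M_{l,\mathcal{B}}(f)$), replacing $\sum_k w_k(f_k-f)^2$ by the loss-based diversity divided by $M_{l,\mathcal{B}}(f)$; this flips direction correctly because of the minus sign and yields the term $-\frac{M_{l,\mathcal{B}}(Y)}{M_{l,\mathcal{B}}(f)}[\sum_k w_k l(f,f_k) - l(f,f)]$. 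On the second piece I invoke the AD-derived bound $\sum_k w_k (f_k-f)^2 \leq \sum_k w_k (Y-f_k)^2$ (using $M_{l,\mathcal{B}}(Y) - m_{l,\mathcal{B}}(Y) \geq 0$), producing the residual term $\tfrac{1}{2}(M_{l,\mathcal{B}}(Y) - m_{l,\mathcal{B}}(Y))\sum_k w_k (Y-f_k)^2$. Assembling the two resulting bounds gives exactly the inequality claimed in Theorem~\ref{thm:gad}; the hard part is recognizing in advance that this particular additive decomposition of the coefficient is what makes the tight curvature estimate $m_{l,\mathcal{B}}(Y)$ coexist with the loose curvature estimate $M_{l,\mathcal{B}}(Y)$ needed to convert variance into loss-based diversity.
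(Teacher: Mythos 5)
Your proof is correct and reaches exactly the stated inequality, but the first half follows a genuinely different route from the paper's. The paper expands $l(Y,\cdot)$ twice about the target $Y$ --- a lower bound on $l(Y,f_k)$ at $Y_0=f_k$ and an upper bound on $l(Y,f)$ at $Y_0=f$ --- and then invokes the full AD identity on $(f-Y)^2$ to trade that term for $\sum_k w_k(Y-f_k)^2 - \sum_k w_k(f_k-f)^2$; the curvature spread $M_{l,\mathcal{B}}(Y)-m_{l,\mathcal{B}}(Y)$ emerges from the collision of those two expansions. You instead expand $l(Y,\cdot)$ once, about the ensemble prediction $f$, which directly yields $l(Y,f) \le \sum_k w_k l(Y,f_k) - \tfrac{m_{l,\mathcal{B}}(Y)}{2}\sum_k w_k(f_k-f)^2$ after the first-order term is killed by $\sum_k w_k(f_k-f)=0$; you then manufacture the curvature spread by the additive split $m = M - (M-m)$ and use AD only in the weakened form $\sum_k w_k(f_k-f)^2 \le \sum_k w_k(Y-f_k)^2$. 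The two derivations converge to the identical intermediate inequality (your split loses exactly the $(Y-f)^2$ slack that the paper's route carries from the start), and the final conversion of $\sum_k w_k(f_k-f)^2$ into the loss-based diversity via the upper bound on $l(f,f_k)$ is the same in both. Your version is arguably cleaner: one fewer Taylor expansion and no need for the full AD identity. One small caveat: your parenthetical claim that $M_{l,\mathcal{B}}(Y)\ge 0$ ``follows from the positivity of $M_{l,\mathcal{B}}(f)$'' is not a valid inference --- the curvature of $l(f,\cdot)$ says nothing about that of $l(Y,\cdot)$ --- but the sign condition $M_{l,\mathcal{B}}(Y)\ge 0$ is equally needed and equally unproved in the paper's own final substitution step, so this is a shared gap in the theorem's hypotheses rather than a defect of your argument; it holds automatically for the convex losses treated later.
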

\begin{proof}
 $\mathcal{B}$ is a closed and bounded set which includes $Y$ and all $f_k$ by definition. Hence we can write the following lower-bound for $l(Y,f_k)$ using Lemma~\ref{lem:loss_bound}:
\begin{align}
 l(Y,f_k) &\geq l(Y,Y) + l'(Y,Y)(f_k - Y) + \frac{m_{l,\mathcal{B}}(Y)}{2}(f_k-Y)^2 \;.
\end{align}
Taking a convex sum on both sides of the above inequality gives
\begin{align}\label{eq:gad_proof_bound1}
 \sum_{k=1}^K w_k l(Y,f_k) &\geq \sum_{k=1}^K w_k l(Y,Y) + \sum_{k=1}^K w_k l'(Y,Y)(f_k - Y) + \sum_{k=1}^K w_k\frac{m_{l,\mathcal{B}}(Y)}{2}(f_k-Y)^2 \nonumber \\
&= l(Y,Y) + l'(Y,Y) (f - Y) + \frac{m_{l,\mathcal{B}}(Y)}{2}\sum_{k=1}^K w_k(f_k-Y)^2 \;.
\end{align}
$\mathcal{B}$ also includes $f$ because it includes all $f_k$ and $f$ is their convex combination. Thus, we consider the following upper-bound on $l(Y,f)$ using Lemma~\ref{lem:loss_bound}:
\begin{align}
 l(Y,f) &\leq l(Y,Y) + l'(Y,Y)(f-Y) + \frac{M_{l,\mathcal{B}}(Y)}{2}(f-Y)^2 \\
\iff l(Y,Y) &+ l'(Y,Y)(f-Y) \geq l(Y,f) - \frac{M_{l,\mathcal{B}}(Y)}{2}(f-Y)^2 \;.
\end{align}
Substituting this inequality in (\ref{eq:gad_proof_bound1}) gives
\begin{align}
 \sum_{k=1}^K w_k l(Y,f_k) &\geq l(Y,f) - \frac{M_{l,\mathcal{B}}(Y)}{2}(f-Y)^2 + \frac{m_{l,\mathcal{B}}(Y)}{2}\sum_{k=1}^K w_k(f_k-Y)^2 \;.
\end{align}
We use AD in Lemma~\ref{lem:ad} for $(f-Y)^2$ and write the above bound as:
\begin{align}\label{eq:gad_proof_bound2}
 \sum_{k=1}^K w_k l(Y,f_k) &\geq l(Y,f) - \frac{1}{2}(M_{l,\mathcal{B}}(Y)-m_{l,\mathcal{B}}(Y))\sum_{k=1}^K w_k (f_k - Y)^2 
+ \frac{M_{l,\mathcal{B}}(Y)}{2}\sum_{k=1}^K w_k (f_k - f)^2 \;.
\end{align}
We finally invoke the following upper bound on $l(f,f_k)$ using Lemma~\ref{lem:loss_bound}:
\begin{align}
 l(f,f_k) \leq l(f,f) &+ l'(f,f)(f-f_k) + \frac{M_{l,\mathcal{B}}(f)}{2} (f - f_k)^2 \\
\iff \frac{M_{l,\mathcal{B}}(f)}{2} (f - f_k)^2 &\geq l(f,f_k) - l(f,f) - l'(f,f)(f-f_k) \\
\iff \frac{M_{l,\mathcal{B}}(f)}{2} \sum_{k=1}^K w_k(f - f_k)^2 &\geq \sum_{k=1}^K w_k l(f,f_k) - l(f,f) \;.
\end{align}
We get the desired result by substituting the above inequality in (\ref{eq:gad_proof_bound2}) and using the fact that $M_l(f) > 0$:
\begin{align}
 \sum_{k=1}^K w_k l(Y,f_k) &\geq l(Y,f) - \frac{1}{2}(M_{l,\mathcal{B}}(Y)-m_{l,\mathcal{B}}(Y))\sum_{k=1}^K w_k (f_k - Y)^2 \nonumber \\
 + \frac{M_{l,\mathcal{B}}(Y)}{M_{l,\mathcal{B}}(f)} \Big{[}& \sum_{k=1}^K w_k l(f,f_k) - l(f,f) \Big{]} \\
  \iff l(Y,f) &\leq \sum_{k=1}^K w_k l(Y,f_k) - \frac{M_{l,\mathcal{B}}(Y)}{M_{l,\mathcal{B}}(f)} \Big{[}\sum_{k=1}^K w_k l(f,f_k) - l(f,f)\Big{]} \nonumber \\
 &+ \frac{1}{2}(M_{l,\mathcal{B}}(Y) - m_{l,\mathcal{B}}(Y))\sum_{k=1}^K w_k (Y-f_k)^2 \;.
\end{align}
\end{proof}
The GAD Theorem is a natural extension of AD in Lemma~\ref{lem:ad} and reduces to it for the case of squared error loss. We can gain more intuition about this result by defining the following quantities:
\begin{align}
 \text{Ensemble loss:} &\quad l(Y,f) \\
 \text{Weighted expert loss:} &\quad \sum_{k=1}^K w_k l(Y,f_k) \\
 \text{Diversity:} &\quad d_l(f_1,\ldots,f_K) = \frac{M_{l,\mathcal{B}}(Y)}{M_{l,\mathcal{B}}(f)} \Bigg{[} \sum_{k=1}^K w_k l(f,f_k) - l(f,f)\Bigg{]} \\
 \text{Curvature spread (CS):} &\quad s_{l,\mathcal{B}}(Y) = M_{l,\mathcal{B}}(Y) - m_{l,\mathcal{B}}(Y) \geq 0
\end{align}
 Ignoring the term involving curvature spread, GAD says that the ensemble loss is upper-bounded by weighted expert loss minus the diversity of the ensemble. Thus, the upper-bound involves a tradeoff between the performance of individual experts (weighted experts loss) and the diversity. Diversity measures the spread of the expert predictions about the ensemble's predictions and is $0$ when $f_k = f, \forall k$. Diversity is non-negative for a convex loss function due to Jensen's inequality~\cite{korn}. Furthermore, diversity depends on the loss function, the true target $Y$ and the prediction of the ensemble $f$ at the current data point. Thus, all data points are not equally important from a diversity perspective. It is also interesting to note that the GAD theorem provides a decomposition of the ensemble loss into a supervised (weighted expert loss) and unsupervised (diversity) term. The latter term does not require labeled data to compute. This makes the overall framework applicable to semi-supervised settings.

The following corollary of Theorem~\ref{thm:gad} gives a simple upper-bound on the error between $l(Y,f)$ and its approximation motivated by the GAD theorem.

\begin{cor}\label{cor:gad}{\bf (Error Bound for GAD Loss Function Approximation)}\\
 If
\begin{align}
 \sum_{k = 1}^{K}w_k(Y - f_k)^2 &= \beta(X,Y) \quad\text{and} \\
 \max_{k\in\{1,\ldots,K\}}(Y - f_k)^2 &= \delta(X,Y) \;,
\end{align}
then the error between the true loss and its GAD approximation is bounded as:
\begin{align}
 l(Y,f) - l_{\text{GAD}}(Y,f) &\leq \frac{1}{2} s_{l,\mathcal{B}}(Y,f)\beta(X,Y) \\
 &\leq \frac{1}{2} s_{l,\mathcal{B}}(Y,f)\delta(X,Y) \;,
\end{align}
where $s_{l,\mathcal{B}}(Y,f)$ is the curvature spread defined previously and
\begin{align}
l_{\text{GAD}}(Y,f) &= \sum_{k=1}^K w_k l(Y,f_k) - d_l(f_1,\ldots,f_K)
\end{align}
is an approximation for $l(Y,f)$ motivated by GAD.
\end{cor}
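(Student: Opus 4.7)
The plan is to obtain the corollary as an almost immediate rearrangement of the GAD inequality in Theorem~\ref{thm:gad}, together with a trivial bound on the weighted average by the maximum. First I would write down the GAD upper bound from the theorem:
\begin{align}
 l(Y,f) &\leq \sum_{k=1}^K w_k l(Y,f_k) - \frac{M_{l,\mathcal{B}}(Y)}{M_{l,\mathcal{B}}(f)}\Big{[}\sum_{k=1}^K w_k l(f,f_k) - l(f,f)\Big{]} \nonumber \\
 &+ \frac{1}{2}\big{(}M_{l,\mathcal{B}}(Y) - m_{l,\mathcal{B}}(Y)\big{)}\sum_{k=1}^K w_k (Y-f_k)^2,
\end{align}
and then observe that the first two groups on the right-hand side are exactly $l_{\text{GAD}}(Y,f)$, since by definition $d_l(f_1,\ldots,f_K) = \tfrac{M_{l,\mathcal{B}}(Y)}{M_{l,\mathcal{B}}(f)}[\sum_k w_k l(f,f_k) - l(f,f)]$ and $l_{\text{GAD}}(Y,f) = \sum_k w_k l(Y,f_k) - d_l(f_1,\ldots,f_K)$.

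Next, I would simply subtract $l_{\text{GAD}}(Y,f)$ from both sides. The cancellation leaves only the curvature-spread remainder term, which by definition of $s_{l,\mathcal{B}}$ and $\beta(X,Y)$ equals $\tfrac{1}{2} s_{l,\mathcal{B}}(Y,f)\,\beta(X,Y)$. That yields the first claimed inequality $l(Y,f) - l_{\text{GAD}}(Y,f) \leq \tfrac{1}{2} s_{l,\mathcal{B}}(Y,f)\beta(X,Y)$.

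Finally, for the second inequality, I would use that the weights $w_k$ form a convex combination: since $w_k \geq 0$ and $\sum_k w_k = 1$, we have
\begin{align}
 \beta(X,Y) = \sum_{k=1}^K w_k (Y-f_k)^2 \leq \max_{k\in\{1,\ldots,K\}} (Y-f_k)^2 = \delta(X,Y).
\end{align}
Multiplying by the nonnegative constant $\tfrac{1}{2}s_{l,\mathcal{B}}(Y,f)$ (nonnegativity follows from $M_{l,\mathcal{B}}(Y) \geq m_{l,\mathcal{B}}(Y)$) gives the second bound. Honestly, there is no real obstacle here: the corollary is a bookkeeping consequence of Theorem~\ref{thm:gad}, so the only thing to be careful about is matching the definitions of $d_l$, $\beta$, $\delta$, and $s_{l,\mathcal{B}}$ to the terms appearing in the GAD bound, and noting the standard fact that a weighted average of nonnegative numbers is dominated by their maximum.
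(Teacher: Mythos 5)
Your proposal is correct and follows essentially the same route as the paper: rearrange the GAD inequality of Theorem~\ref{thm:gad} so that the weighted expert loss minus the diversity term is recognized as $l_{\text{GAD}}(Y,f)$, identify the remaining term as $\tfrac{1}{2}s_{l,\mathcal{B}}(Y,f)\beta(X,Y)$, and then bound the weighted average $\beta(X,Y)$ by the maximum $\delta(X,Y)$. No gaps; this matches the paper's proof step for step.
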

\begin{proof}
 Theorem~\ref{thm:gad} gives:
\begin{align}
 l(Y,f) - l_{\text{GAD}}(Y,f) &\leq \frac{1}{2} (M_{l,\mathcal{B}}(Y) - m_{l,\mathcal{B}}(Y))\sum_{k=1}^K w_k (Y-f_k)^2 \nonumber \\
 &= \frac{1}{2}(M_{l,\mathcal{B}}(Y) - m_{l,\mathcal{B}}(Y))\beta(X,Y) \;.
\end{align}
We also note that:
\begin{align}
	\sum_{k=1}^K w_k (Y-f_k)^2 &\leq \max_{k\in\{1,\ldots,K\}}(Y - f_k)^2 = \delta(X,Y) \;.
\end{align}
Hence we can also write the following less tight upper bound on the error:
\begin{align}
 l(Y,f) - l_{\text{GAD}}(Y,f) &\leq \frac{1}{2} (M_{l,\mathcal{B}}(Y) - m_{l,\mathcal{B}}(Y))\delta(X,Y) \;.
\end{align}
\end{proof}
Corollary~\ref{cor:gad} shows that $l_{\text{GAD}}(Y,f)$ is a good approximation for $l(Y,f)$ when the curvature spread is small and all expert predictions are close to the true target $Y$. For instances $(X,Y)$ where multiple experts in the ensemble are far away from the true target, $l_{\text{GAD}}(Y,f)$ has a high error. To summarize, the accuracy of $l_{\text{GAD}}$ depends on the data instance, loss function and the expert predictions.

We note that the diversity term in the GAD theorem computes the loss function between each expert $f_k$ and the ensemble prediction $f$. However, it is sometimes useful to understand diversity in terms of pairwise loss functions between the expert predictions themselves. The next corollary to the GAD theorem shows that we can indeed re-write the diversity term in pairwise fashion for a metric loss function.

\begin{cor}\label{cor:pairwise_gad}{\bf (Pairwise GAD Theorem for Metric Loss Functions)}
	Consider a metric loss function $l$ and also let $w_k = 1/K~\forall k$ for simplicity. Then the GAD theorem becomes
	\begin{align}
		l(Y,f) &\leq \frac{1}{K}\sum_{k=1}^K l(Y,f_k) - \frac{M_{l,\mathcal{B}}(Y)}{M_{l,\mathcal{B}}(f)} \Bigg{[}\frac{1}{K(K-1)}\sum_{k_1=1}^K\sum_{k_2=k_1+1}^K l(f_{k_1},f_{k_2}) \Bigg{]} \nonumber \\
 &+ \frac{1}{2}\Big{(}M_{l,\mathcal{B}}(Y) - m_{l,\mathcal{B}}(Y)\Big{)}\sum_{k=1}^K (Y-f_k)^2 \;.
	\end{align}
\end{cor}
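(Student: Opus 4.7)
The plan is to start from Theorem~\ref{thm:gad} specialized to uniform weights $w_k = 1/K$, and then exploit the two defining properties of a metric loss---namely $l(f,f) = 0$ and the triangle inequality together with symmetry---to rewrite the diversity term as a pairwise quantity while preserving the direction of the upper bound.

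First I would substitute $w_k = 1/K$ directly into the bound of Theorem~\ref{thm:gad}. Because $l$ is a metric, $l(f,f) = 0$, so the diversity piece reduces to $\frac{M_{l,\mathcal{B}}(Y)}{M_{l,\mathcal{B}}(f)}\cdot\frac{1}{K}\sum_{k=1}^K l(f,f_k)$. Next I would compare this expert-to-ensemble form against the pairwise form stated in the corollary. The triangle inequality, combined with symmetry $l(f,f_{k_i}) = l(f_{k_i},f)$, yields $l(f_{k_1},f_{k_2}) \leq l(f,f_{k_1}) + l(f,f_{k_2})$ for every pair. Summing over the $\binom{K}{2}$ unordered pairs and noting that each term $l(f,f_j)$ appears in exactly $K-1$ of them, I obtain
\[
\frac{1}{K(K-1)}\sum_{k_1=1}^K\sum_{k_2=k_1+1}^K l(f_{k_1},f_{k_2}) \;\leq\; \frac{1}{K}\sum_{k=1}^K l(f,f_k).
\]
Since the diversity term enters the GAD bound with a minus sign, replacing the larger expert-to-ensemble average by this smaller pairwise average only \emph{weakens} the upper bound, so Theorem~\ref{thm:gad} continues to hold with the pairwise diversity in place of the original.

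For the curvature-spread residual, specializing to uniform weights gives $\tfrac{1}{2}s_{l,\mathcal{B}}(Y)\cdot\tfrac{1}{K}\sum_k (Y-f_k)^2$, which is bounded above by $\tfrac{1}{2}s_{l,\mathcal{B}}(Y)\sum_k (Y-f_k)^2$ whenever $K\geq 1$, matching the form written in the corollary. Assembling the three pieces completes the proof. The main obstacle is not conceptual but directional bookkeeping: because the diversity appears with a negative coefficient in GAD, every step that replaces it with a smaller pairwise quantity must be verified to push the right-hand side \emph{upward}; otherwise the direction of the inequality would flip and the upper-bound interpretation would be lost.
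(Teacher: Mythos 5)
Your proposal is correct and follows essentially the same route as the paper: apply the triangle inequality (with symmetry) to each pair, sum over the $\binom{K}{2}$ pairs so that each $l(f,f_k)$ appears $K-1$ times, use $l(f,f)=0$, and observe that lower-bounding the diversity term (which enters with a negative sign) only weakens the upper bound of Theorem~\ref{thm:gad}. You are in fact slightly more careful than the paper in one spot: you explicitly justify replacing $\tfrac{1}{K}\sum_k (Y-f_k)^2$ by $\sum_k (Y-f_k)^2$ in the curvature-spread term via nonnegativity, a step the paper's proof passes over silently.
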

\begin{proof}
	The loss function satisfies the triangle inequality because it is given to be a metric. We visualize the output of each expert and the ensemble's prediction as points in a metric space induced by the metric loss function. Hence
	\begin{align}
		l(f_{k_1},f_{k_2}) &\leq l(f,f_{k_1}) + l(f,f_{k_2})
	\end{align}
	for all $k_1 \in \{1,\ldots,K\}$ and $k_2 \in \{k_1+1,\ldots,K\}$. We add these $K(K-1)$ inequalities to get
	\begin{align}
		\sum_{k_1=1}^K\sum_{k_2=k_1+1}^K l(f_{k_1},f_{k_2}) &\leq (K-1)\sum_{k=1}^K l(f,f_k) \;.
	\end{align}
	We also note that $l(f,f) = 0$ because $l$ is a metric. Hence we get the following lower bound on the diversity term in GAD
	\begin{align}
		\frac{M_{l,\mathcal{B}}(Y)}{M_{l,\mathcal{B}}(f)} \Bigg{[}\frac{1}{K}\sum_{k=1}^K l(f,f_k) - l(f,f) \Bigg{]} &\geq \frac{M_{l,\mathcal{B}}(Y)}{M_{l,\mathcal{B}}(f)} \Bigg{[}\frac{1}{K(K-1)}\sum_{k_1=1}^K\sum_{k_2=k_1+1}^K l(f_{k_1},f_{k_2}) \Bigg{]}
	\end{align}
	Substituting this lower-bound in GAD from Theorem~\ref{thm:gad} gives the desired decomposition with pairwise diversity.
\end{proof}

The squared error and absolute error loss functions used for regression are metric functions and thus permit a decomposition with a pairwise diversity term as given in Corollary~\ref{cor:pairwise_gad} above. We now derive the quantities required for GAD approximation of common loss functions in the next section.

\section{GAD for Common Loss Functions}\label{sec:comm_loss}
%

We note that the computation of $M_{l,\mathcal{B}}(Y)$ and $m_{l,\mathcal{B}}(Y)$ is critical to the GAD theorem. Hence the following subsections focus on deriving these quantities for various common classification and regression loss functions.

\subsection{Squared Error Loss}\label{subsec:sqr_err_loss}
Squared error is the most common loss function used for regression and is defined as given below:
\begin{align}
 l_{\text{sqr}}(Y,Y_0) &= (Y-Y_0)^2 \quad \text{where } Y, Y_0 \in \mathcal{R}\;.
\end{align}
Its second derivative with respect to $Y_0$ is $l''_{\text{sqr}}(Y,Y_0) = 2$. Hence $M_{l,\mathcal{B}}(Y) = m_{l,\mathcal{B}}(Y) = 2~\forall Y$ and CS is $0$. Thus GAD reduces to AD in Lemma~\ref{lem:ad}.

\subsection{Absolute Error Loss}\label{subsec:abs_err_loss}
Absolute error loss function is more robust than squared error for outliers and is defined as:
\begin{align}
 l_{\text{abs}}(Y,Y_0) &= |Y-Y_0| \quad \text{where } Y, Y_0 \in \mathcal{R} \;.
\end{align}
This function is not differentiable at $Y_0 = Y$. We thus consider two commonly used smooth approximations to the absolute error loss function. The first one uses the integral of the inverse tangent function which approximates the sign function. This leads to the following approximation:
\begin{align}
 l_{\text{abs, approx1}}(Y,Y_0) &= \frac{2(Y-Y_0)}{\pi} \tan^{-1}\Bigg{(}\frac{Y-Y_0}{\epsilon}\Bigg{)} \quad \text{where } Y, Y_0 \in \mathcal{R} \text{ and } \epsilon > 0 \;.
\end{align}
One can get an arbitarily close approximation by setting a suitably small positive value of $\epsilon$. The second derivative of the loss function with respect to $Y_0$ is
\begin{align}
 l''_{\text{abs, approx1}}(Y,Y_0) &= \frac{4}{\epsilon\pi\Big{[}1+\Big{(}\frac{Y-Y_0}{\epsilon}\Big{)}^2\Big{]}^2} \;.
\end{align}
We need to compute the maximum and minimum of the above second derivative for $Y_0 \in \mathcal{B}$. The above function is monotonically increasing for $Y_0 < Y$, achieves its maxima at $Y_0 = Y$, and monotonically decreases for $Y_0 \geq Y$. We note that $Y \in \mathcal{B}$ by definition of $\mathcal{B}$. Hence, the maximum of $l''_{\text{abs, approx1}}(Y,Y_0)$ over $\mathcal{B}$ occurs at $Y_0 = Y$ and is given by:
\begin{align}
M_{l,\mathcal{B}}(Y) &= l''_{\text{abs, approx1}}(Y,Y) = \frac{4}{\pi\epsilon}\;.
\end{align}
The minimum value depends on the location of $\mathcal{B} = [b_\text{min},b_\text{max}]$ and is given below:
\begin{align}
 m_{l,\mathcal{B}}(Y) &= \left\{
	\begin{array}{ll}
	l''_{\text{abs, approx1}}(Y,b_\text{min}) &;\text{ if } b_\text{max} + b_\text{min} < 2Y \\
	l''_{\text{abs, approx1}}(Y,b_\text{max}) &;\text{ otherwise}
	\end{array}\;.
 \right.
\end{align}
We also consider a second smooth approximation of absolute error:
\begin{align}
 l_{\text{abs, approx2}}(Y,Y_0) &= \sqrt{(Y-Y_0)^2 + \epsilon} - \sqrt{\epsilon} \quad \text{where } Y, Y_0 \in \mathcal{R} \text{ and } \epsilon > 0 \;.
\end{align}
This approximation becomes better with smaller positive values of $\epsilon$. The second derivative of the above approximation with respect to $Y_0$ is
\begin{align}
 l''_{\text{abs, approx2}}(Y,Y_0) &= \frac{\epsilon}{[(Y-Y_0)^2 + \epsilon]^{3/2}} \;.
\end{align}
The behavior of the above function with $Y_0$ is the same as $l''_{\text{abs, approx1}}(Y,Y_0)$. It has a monotonic increase for $Y_0 < Y$, achieves maxima at $Y_0 = Y$, and has a monotonic decrease for $Y_0 \geq Y$. This results in the following second derivative maxima and minima over $\mathcal{B}$:
\begin{align}
 M_{l,\mathcal{B}}(Y) &= l''_{\text{abs,approx2}}(Y,Y) = \frac{1}{\sqrt{\epsilon}} \;.
\end{align}
\begin{align}
 m_{l,\mathcal{B}}(Y) &= \left\{
	\begin{array}{ll}
	l''_{\text{abs, approx2}}(Y,b_\text{min}) &;\text{ if } b_\text{max} + b_\text{min} < 2Y \\
	l''_{\text{abs, approx2}}(Y,b_\text{max}) &;\text{ otherwise}
	\end{array}\;.
 \right. 
\end{align}
Both types of smooth absolute error loss functions give a non-zero curvature spread when compared to the squared error loss function. This leads to a non-zero approximation error for the GAD theorem.

\subsection{Logistic Loss}\label{subsec:log_loss}
Logistic regression is a popular technique for classification. We consider the binary classification case where the label set $\mathcal{Y}$ = \{-1,1\}. The logistic loss function is
\begin{align}
 l_{\text{log}}(Y,Y_0) &= \log(1+\exp(-YY_0)) \quad \text{where } Y \in \{-1,1\} \text{ and } Y_0 \in \mathcal{R} \;.
\end{align}
$Y_0$ is replaced by the expert's prediction for supervised learning and is typically modeled by an affine function of $X$. The ensemble is thus a convex combination of affine experts. The second derivative of the above loss with respect to $Y_0$ is
\begin{align}
 l''_{\text{log}}(Y,Y_0) &= \frac{Y^2 \exp(-YY_0)}{(1+\exp(-YY_0))^2}\;.
\end{align}
$l''_{\text{log}}(Y,Y_0)$ is an even function of $Y_0$. It is monotonically increasing for $Y_0 \leq 0$, reaches its maximum at $Y_0 = 0$, and is monotonically decreasing for $Y_0 \geq 0$. Hence we can write $m_{l,\mathcal{B}}(Y)$ as
\begin{align}
 m_{l,\mathcal{B}}(Y) &= \left\{
	\begin{array}{ll}
	 l''_{\text{log}}(Y,b_\text{min}) &;\text{ if } b_\text{max} < 0 \text{ or if } b_\text{max} + b_\text{min} < 0 \\
	 l''_{\text{log}}(Y,b_\text{max}) &;\text{ otherwise}
	\end{array}\;.
 \right.
\end{align}
Similarly, we can write $M_{l,\mathcal{B}}(Y)$ as
\begin{align}
  M_{l,\mathcal{B}}(Y) &= \left\{
	\begin{array}{ll}
	 l''_{\text{log}}(Y,b_\text{max}) &;\text{ if } b_\text{max} < 0 \\
	 l''_{\text{log}}(Y,b_\text{min}) &;\text{ if } b_\text{min} > 0 \\
	 l''_{\text{log}}(Y,0) = Y^2/4 &;\text{ otherwise}
	\end{array}\;.
 \right.
\end{align}


\subsection{Exponential Loss}\label{subsec:exp_loss}
AdaBoost.M1~\cite{freund1995decision} uses the exponential loss function which is defined as
\begin{align}
 l_{\text{exp}}(Y,Y_0) &= \exp(-YY_0) \quad \text{where } Y \in \{-1,1\} \text{ and } Y_0 \in \mathcal{R} \;.
\end{align}
The second derivative of the loss function is
\begin{align}
 l''_{\text{exp}}(Y,Y_0) &= Y^2 \exp(-YY_0) \;.
\end{align}
The above function of $Y_0$ is monotonically increasing when $Y < 0$ and monotonically decreasing when $Y \geq 0$. Hence $m_{l,\mathcal{B}}(Y)$ becomes
\begin{align}
 m_{l,\mathcal{B}}(Y) &= \left\{
	\begin{array}{ll}
	l''_{\text{exp}}(Y,b_\text{min}) &;\text{ if } Y < 0 \\ 
	l''_{\text{exp}}(Y,b_\text{max}) &;\text{ otherwise}
	\end{array} \;.
 \right.
\end{align}
Similarly, $M_{l,\mathcal{B}}$ becomes
\begin{align}
 M_{l,\mathcal{B}}(Y) &= \left\{
	\begin{array}{ll}
	l''_{\text{exp}}(Y,b_\text{max}) &;\text{ if } Y < 0 \\ 
	l''_{\text{exp}}(Y,b_\text{min}) &;\text{ otherwise}
	\end{array}\;.
 \right.
\end{align}


\subsection{Hinge Loss}\label{subsec:hinge_loss}
The hinge loss is another popular loss function which is used for training support vector machines (SVMs)~\cite{cortes1995support} and is defined as
\begin{align}
 l_{\text{hinge}}(Y,Y_0) &= \max(0,1-YY_0) \quad \text{where } Y \in \{-1,1\} \text{ and } Y_0 \in \mathcal{R} \;.
\end{align}
The above loss function is not differentiable when $YY_0 = 1$. Hence we use the following smooth approximation from Smooth SVM (SSVM)~\cite{lee2001ssvm}:
\begin{align}
 l_{\text{hinge, smooth}}(Y,Y_0) &= 1 - YY_0 + \epsilon\log\Bigg{[}1+\exp\Bigg{(}-\frac{1-YY_0}{\epsilon}\Bigg{)}\Bigg{]} \quad \text{where } \epsilon > 0 \;.
\end{align}
The above approximation is based on the logistic sigmoidal approximation of the sign function which is often used in neural networks~\cite{kosko1992neural}. Picking a small positive value of $\epsilon$ ensures low approximation error. The second derivative with respect to $Y_0$ is
\begin{align}
 l''_{\text{hinge, smooth}}(Y,Y_0) &= \frac{Y^2\exp\Big{(}-\frac{1-YY_0}{\epsilon}\Big{)}}{\epsilon\Big{[}1+\exp\Big{(}-\frac{1-YY_0}{\epsilon}\Big{)}\Big{]}^2} \;.
\end{align}
The above function of $Y_0$ is symmetrical about $Y_0=1/Y$, increases for $Y_0 < 1/Y$, attains its maximum for $Y_0 = 1/Y$, and decreases for $Y_0 \geq 1/Y$. Hence $M_{l,\mathcal{B}}(Y)$ is
\begin{align}
	M_{l,\mathcal{B}}(Y) &= \left\{
	\begin{array}{ll}
	l''_{\text{hinge, smooth}}(Y,b_\text{max}) &;\text{ if } b_\text{max} < 1/Y \\
	l''_{\text{hinge, smooth}}(Y,b_\text{min}) &;\text{ if } b_\text{min} > 1/Y \\
	l''_{\text{hinge, smooth}}(Y,1/Y) \text{ or } Y^2/(4\epsilon) &;\text{ otherwise}
	\end{array}\;.
 \right.
\end{align}

Similarly, the value of $m_{l,\mathcal{B}}(Y)$ also depends on the location of the interval $\mathcal{B}$ and is given below:
\begin{align}
 m_{l,\mathcal{B}}(Y) &= \left\{
	\begin{array}{ll}
	l''_{\text{hinge, smooth}}(Y,b_\text{min}) &;\text{ if } b_\text{max} < 1/Y \text{or if } b_\text{max} + b_\text{min} < 2/Y \\
	l''_{\text{hinge, smooth}}(Y,b_\text{max}) &;\text{ otherwise}
	\end{array}\;.
 \right.
\end{align}
The expressions for $M_{l,\mathcal{B}}(Y)$ and $m_{l,\mathcal{B}}(Y)$ derived in this section for various loss functions are used to derive the GAD approximation for
the ensemble loss. Theoretical analysis of this approximation is not easy for all loss functions. Hence the next section presents simulation experiments for understanding the GAD theorem.

\section{Simulation Experiments on the GAD Theorem for Common Loss Functions}\label{sec:sim}
This section begins by understanding the tradeoff between the diversity term and weighted expert loss in the GAD theorem. We next analyze the accuracy of the ensemble loss approximation motivated by the GAD theorem.
We finally contrast the GAD approximation with the Taylor series approximation used in gradient boosting.

\subsection{Behavior of Weighted Expert Loss and Diversity in GAD}
Consider the following proxy for the true loss function implied by GAD:
\begin{align}\label{eq:lgad}
	l_{\text{GAD}}(Y,f) &= \sum_{k=1}^K w_k l(Y,f_k) - d_l(f_1,\ldots,f_K) \;.
\end{align}
where $d_l(f_1,\ldots,f_K)$ is the diversity. The first term on the right hand side of the above equation is the weighted sum of the individual expert's losses. We note that this term provides a simple upper bound on $l(Y,f)$ due to Jensen's inequality for convex loss functions:
\begin{align}\label{eq:wgt_exp_loss}
	l(Y,f) &\leq \sum_{k=1}^K w_k l(Y,f_k) = l_\text{WGT}(Y,f) \;.
\end{align}

To understand the tradeoff between the two terms on the right hand side of Eq.~\ref{eq:lgad}, we performed Monte Carlo simulations because the analytical forms of $d_l(f_1,\ldots,f_K)$ for common loss functions derived in the previous section are not amenable to direct theoretical analysis. The $K$ expert predictions were sampled from an independent and identically distributed (IID) Gaussian random variable with mean $\mu_f$ and variance $\sigma_f^2$. That is
\begin{align}
	f_k &\sim \mathcal{N}(\mu_f,\sigma_f^2) \; \text{ for all } k \in \{1,\ldots,K\} \;.
\end{align}

A unimodal distribution was used since it is intuitive to expect most of the experts to give numerically close predictions. We used a Gaussian probability density function (PDF) for our simulations since it is the most popular unimodal PDF. The convex nature of the ensemble ensures that $\mu_f$ is also the expected ensemble prediction. This is because
\begin{align}
	\mathbb{E}\{f\} &= \mathbb{E}\Big{\{}\sum_{k=1}^K w_k f_k\Big{\}} = \sum_{k=1}^K w_k  \mathbb{E}\{f_k\} = \sum_{k=1}^K w_k  \mu_f = \mu_f \;.
\end{align}

The variance $\sigma_f^2$ governs the spread of the predictions around the mean. We varied $\mu_f$ around the true label $Y=1$. We picked $Y=1$ because the two regression loss functions depend only on the distance of the prediction from the target. The analysis also extends easily to $Y=-1$ for classification loss functions. We generated $1000$ Monte Carlo samples for $K=3$ and $7$ experts. We set $\sigma_f^2 = 2$ for these simulations. Figures~\ref{fig:sqr_loss_decomp}-\ref{fig:hin_loss_decomp} show the median values of $l(Y,f)$, $l_\text{GAD}(Y,f)$, and the weighted expert loss for various loss functions. These figures also plot the median diversity term $d_l(f_1,\ldots,f_K)$ with the expected ensemble prediction $\mu_f$.

We first analyse the plots for the two regression loss functions. Figure~\ref{fig:sqr_loss_decomp} shows the case for the squared error loss function. We note that $l(Y,f)$ and $l_\text{GAD}(Y,f)$ overlap for all values of $\mu_f$ because the GAD theorem reduces to the ambiguity decomposition. The diversity term also remains nearly constant because it is the maximum likelihood estimator of the variance $\sigma_f^2$. Diversity corrects for the bias between the weighted expert loss (green curve) and the actual ensemble loss function (black curve). Figure~\ref{fig:abs_loss_decomp} shows the corresponding figure for the smooth absolute error loss function with $\epsilon = 0.5$. $l_\text{GAD}(Y,f)$ provides a very accurate approximation of the true loss function around the true label $Y=1$ and becomes a poorer approximation as we move away. This is because GAD assumes the experts predictions to be close to the true label. $l_\text{WGT}(Y,f)$ gives a much larger approximation error in comparison around $Y=1$. Also, the diversity term is nearly constant because its computation normalizes for the value of $\mu_f$ by subtracting $f$ from each $f_k$.

\begin{figure}[h]
 \centering
 \mbox{
 \subfigure{\includegraphics[width=0.5\textwidth]{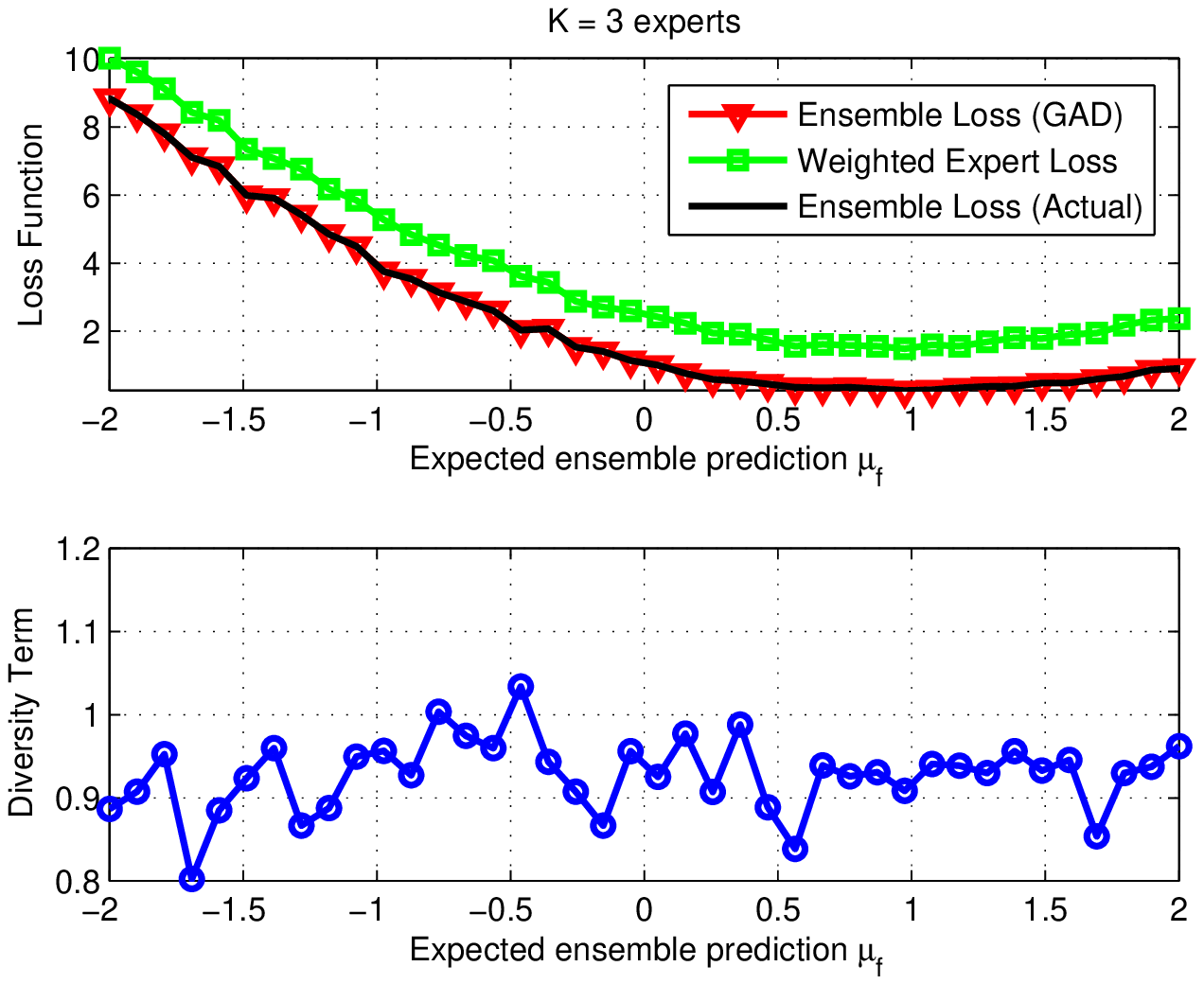}}
 \quad
 \subfigure{\includegraphics[width=0.5\textwidth]{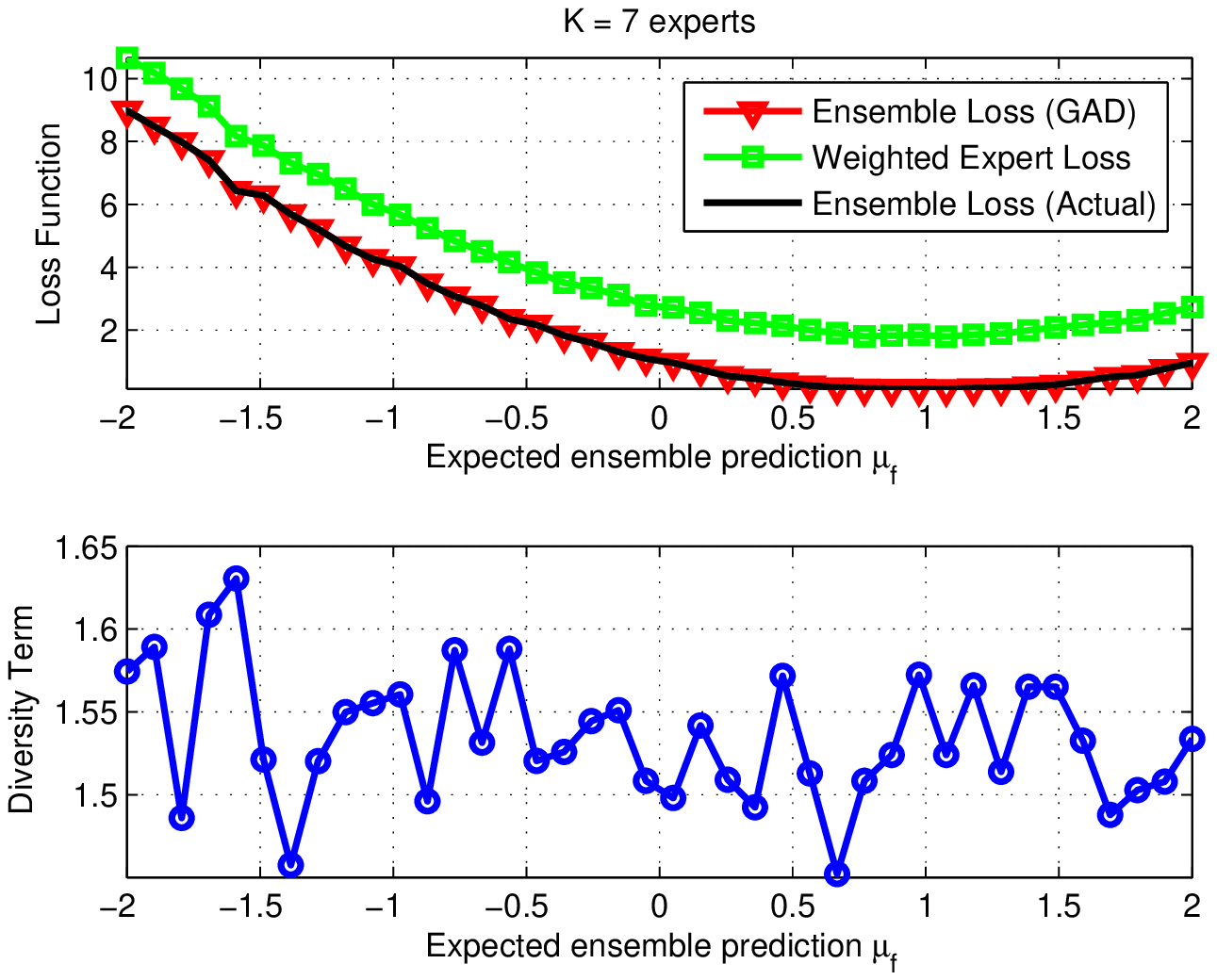}}
 }
	\caption{The top plot in each figure shows the median actual ensemble loss, its GAD approximation and weighted expert loss across $1000$ Monte Carlo samples in an ensemble of $K=3$ and $K=7$ experts for the squared error loss function as a function of expected ensemble prediction $\mu_f$. We used $\sigma_f^2 = 2$. $Y = 1$ is the correct label. We also show the median diversity term for the same setup in the bottom plot.}
 \label{fig:sqr_loss_decomp}
\end{figure}

\begin{figure}[h]
 \centering
 \mbox{
 \subfigure{\includegraphics[width=0.5\textwidth]{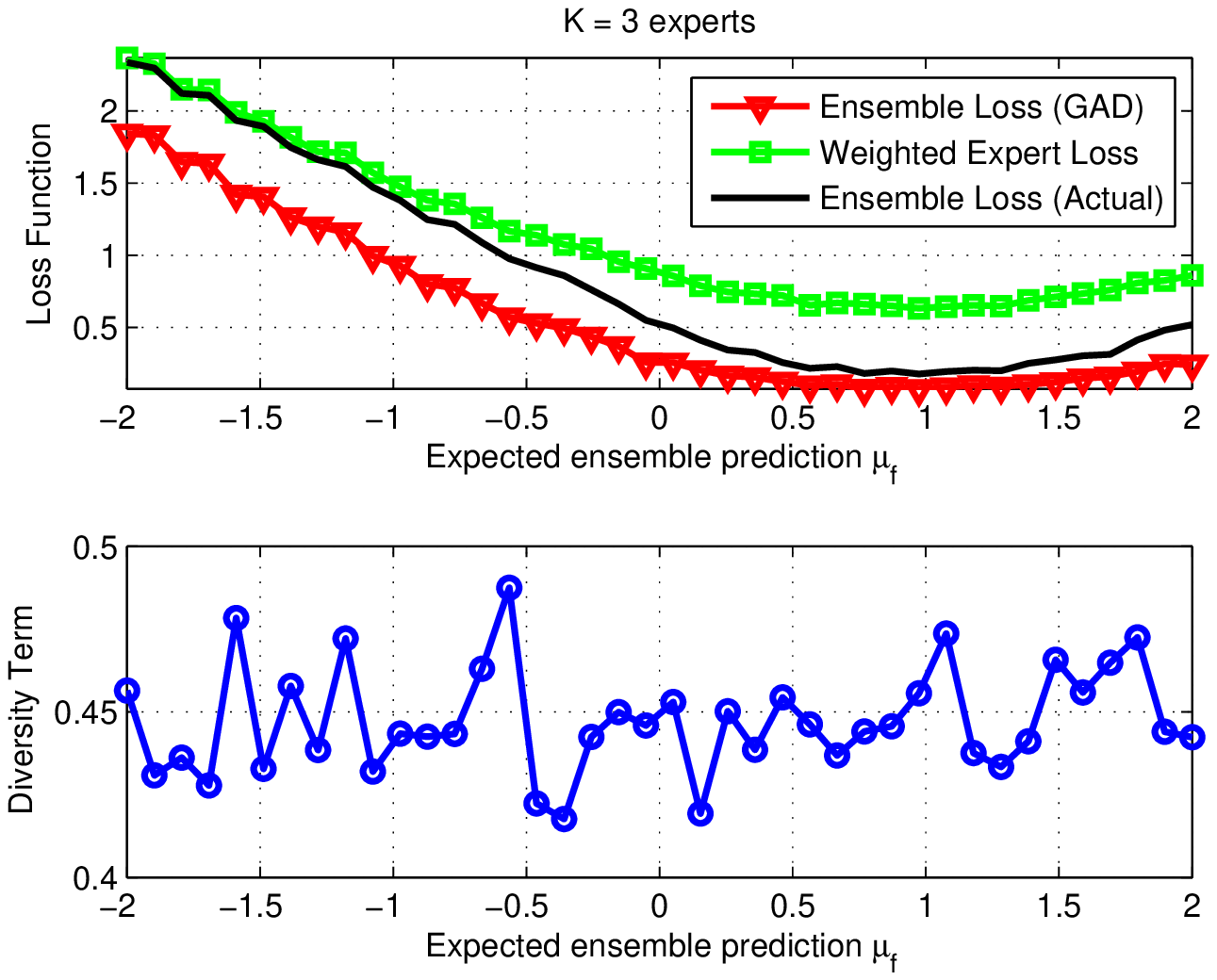}}
 \quad
 \subfigure{\includegraphics[width=0.5\textwidth]{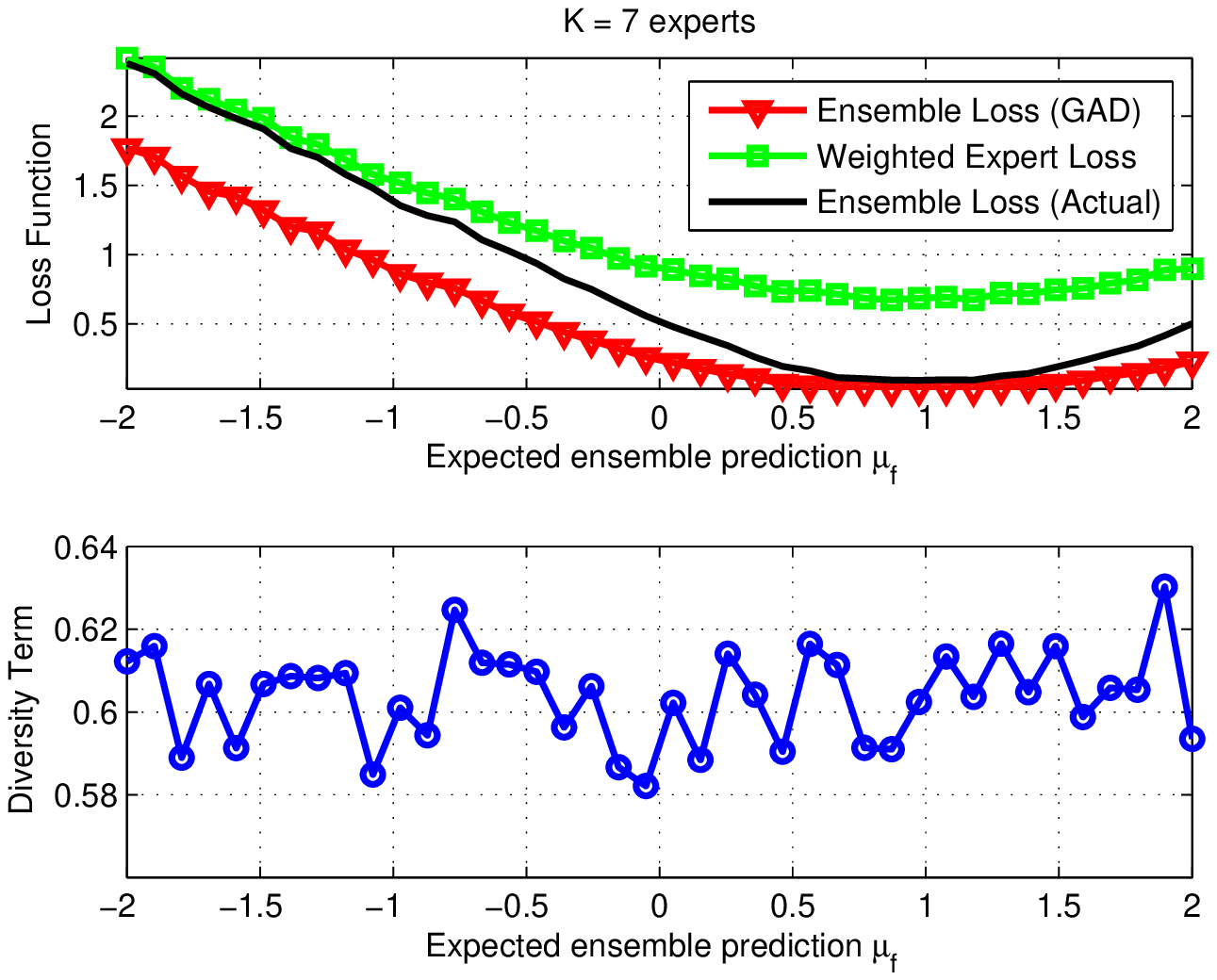}}
 }
	\caption{The top plot in each figure shows the median actual ensemble loss, its GAD approximation and weighted expert loss across $1000$ Monte Carlo samples in an ensemble of $K=3$ and $K=7$ experts for the smooth absolute error loss function as a function of expected ensemble prediction $\mu_f$. We used $\sigma_f^2 = 2$ and $\epsilon = 0.5$. $Y = 1$ is the correct label. We also show the median diversity term for the same setup in the bottom plot.}
 \label{fig:abs_loss_decomp}
\end{figure}

Figures~\ref{fig:log_loss_decomp}-\ref{fig:hin_loss_decomp} show the plots for the three classification loss functions - logistic, exponential, and smooth hinge ($\epsilon = 0.5$) with true label $Y=1$. $l_\text{GAD}(Y,f)$ provides an accurate approximation to $l(Y,f)$ near the true label $Y = 1$ as was the case for the regression loss functions. However the diversity term is not constant, but unimodal with a peak around the decision boundary $\mu_f = 0$. This is because the experts disagree a lot at the decision boundary which causes high diversity. Diversity reduces as we move away from the decision boundary in both directions. Diversity in GAD is agnostic to the true label and only quantifies the spread of the expert predictions with respect to the given loss function. The weighted expert loss term captures the accuracy of the experts with respect to the true label. Even though all experts agree when they are predicting the incorrect class for $\mu_f < 0$, the overall loss rises due to an increase in the weighted expert loss. 

\begin{figure}[h]
 \centering
 \mbox{
 \subfigure{\includegraphics[width=0.5\textwidth]{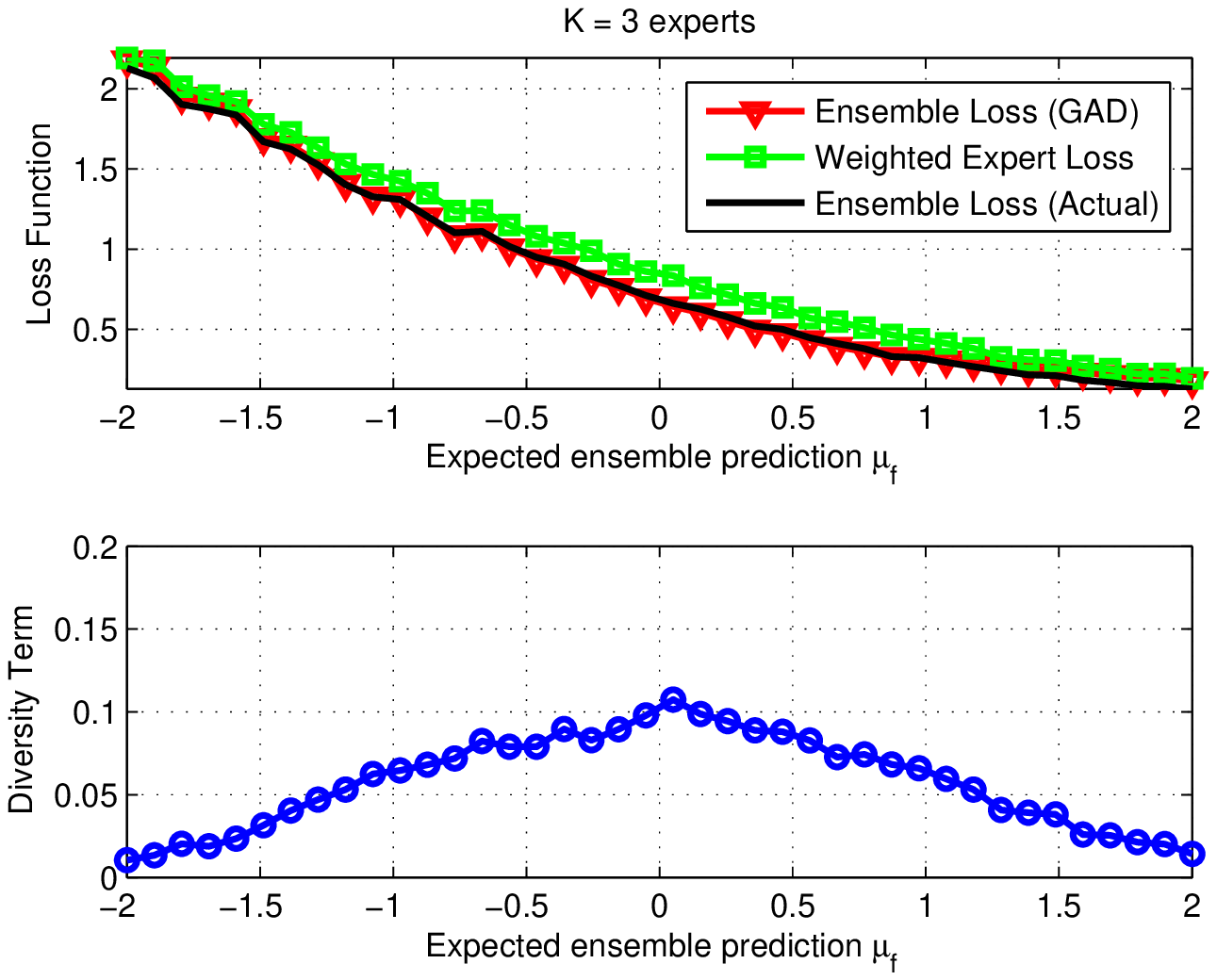}}
 \quad
 \subfigure{\includegraphics[width=0.5\textwidth]{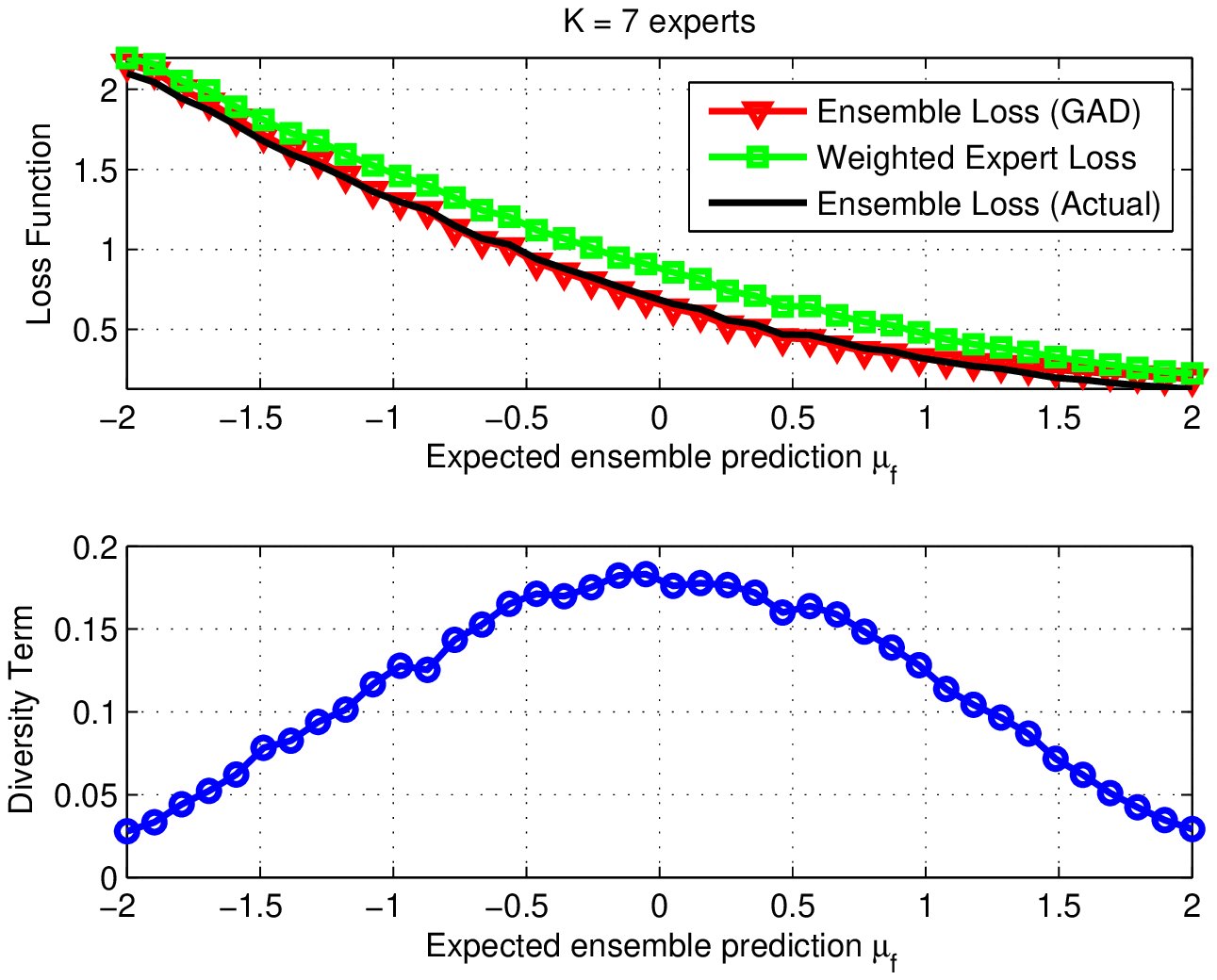}}
 }
	\caption{The top plot in each figure shows the median actual ensemble loss, its GAD approximation and weighted expert loss across $1000$ Monte Carlo samples in an ensemble of $K=3$ and $K=7$ experts for the logistic loss function as a function of expected ensemble prediction $\mu_f$. We used $\sigma_f^2 = 2$. $Y = 1$ is the correct label. We also show the median diversity term for the same setup in the bottom plot.}
 \label{fig:log_loss_decomp}
\end{figure}

\begin{figure}[h]
 \centering
 \mbox{
 \subfigure{\includegraphics[width=0.5\textwidth]{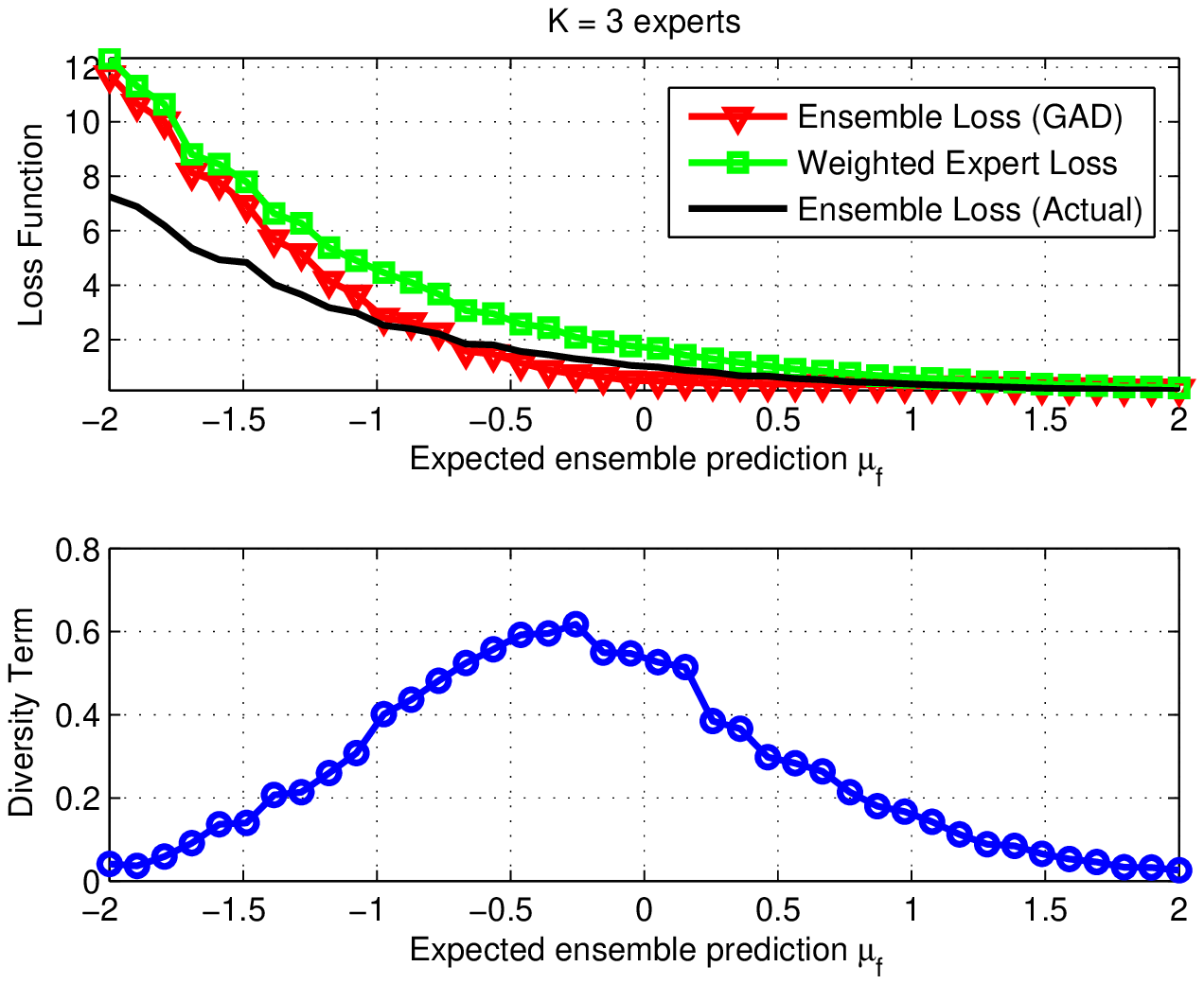}}
 \quad
 \subfigure{\includegraphics[width=0.5\textwidth]{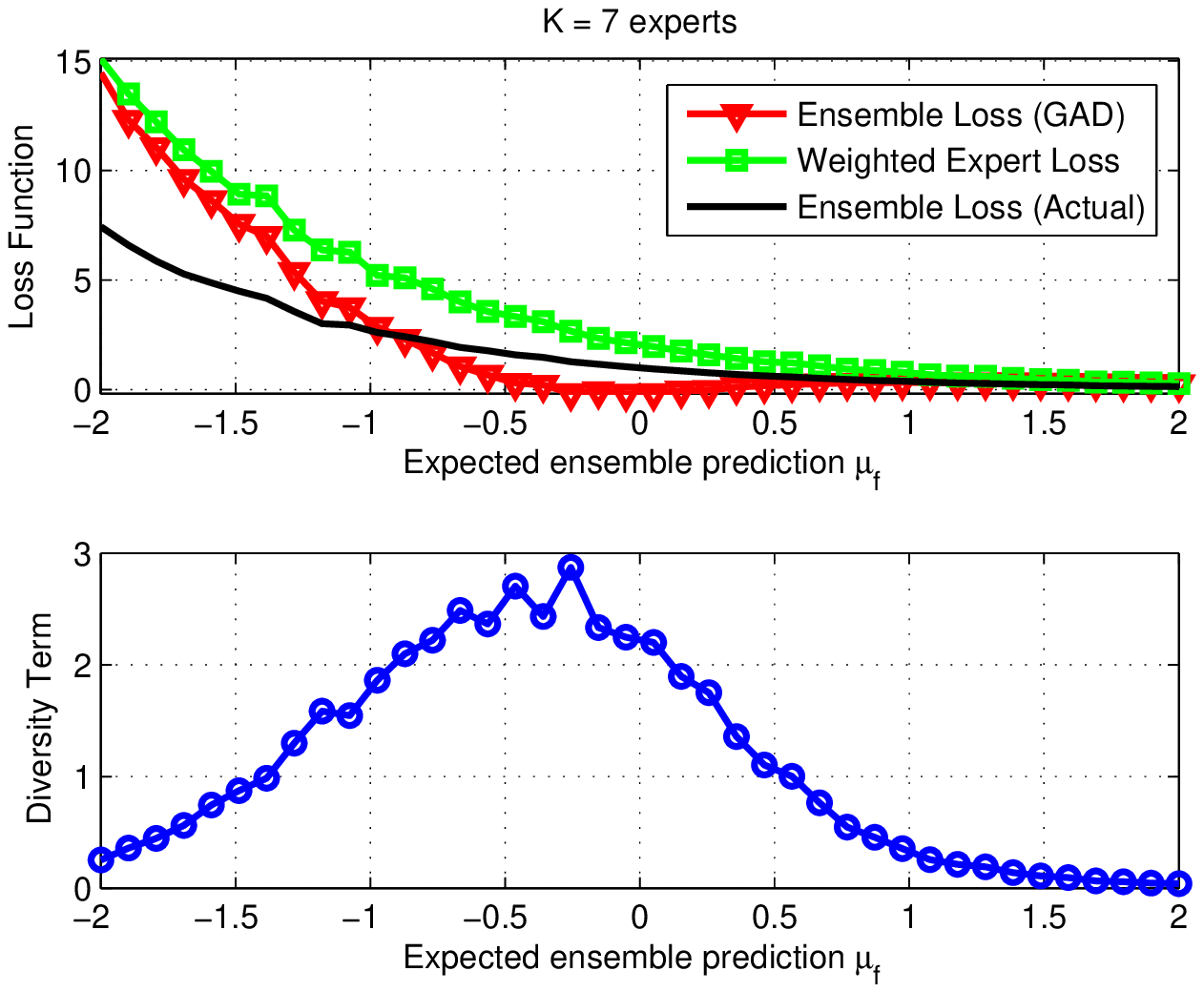}}
 }
	\caption{The top plot in each figure shows the median actual ensemble loss, its GAD approximation and weighted expert loss across $1000$ Monte Carlo samples in an ensemble of $K=3$ and $K=7$ experts for the exponential loss function as a function of expected ensemble prediction $\mu_f$. We used $\sigma_f^2 = 2$. $Y = 1$ is the correct label. We also show the median diversity term for the same setup in the bottom plot.}
 \label{fig:exp_loss_decomp}
\end{figure}

\begin{figure}[h]
 \centering
 \mbox{
 \subfigure{\includegraphics[width=0.5\textwidth]{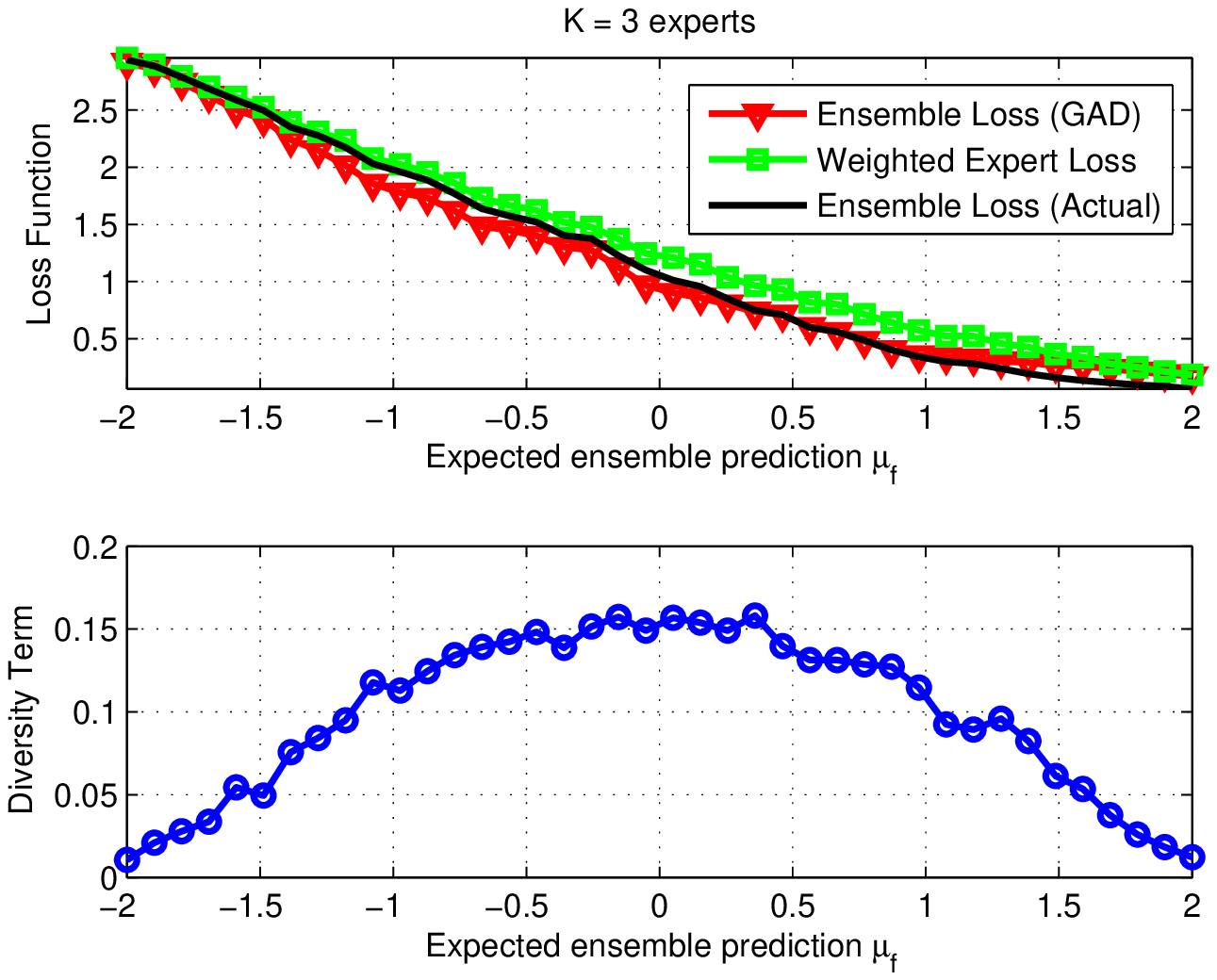}}
 \quad
 \subfigure{\includegraphics[width=0.5\textwidth]{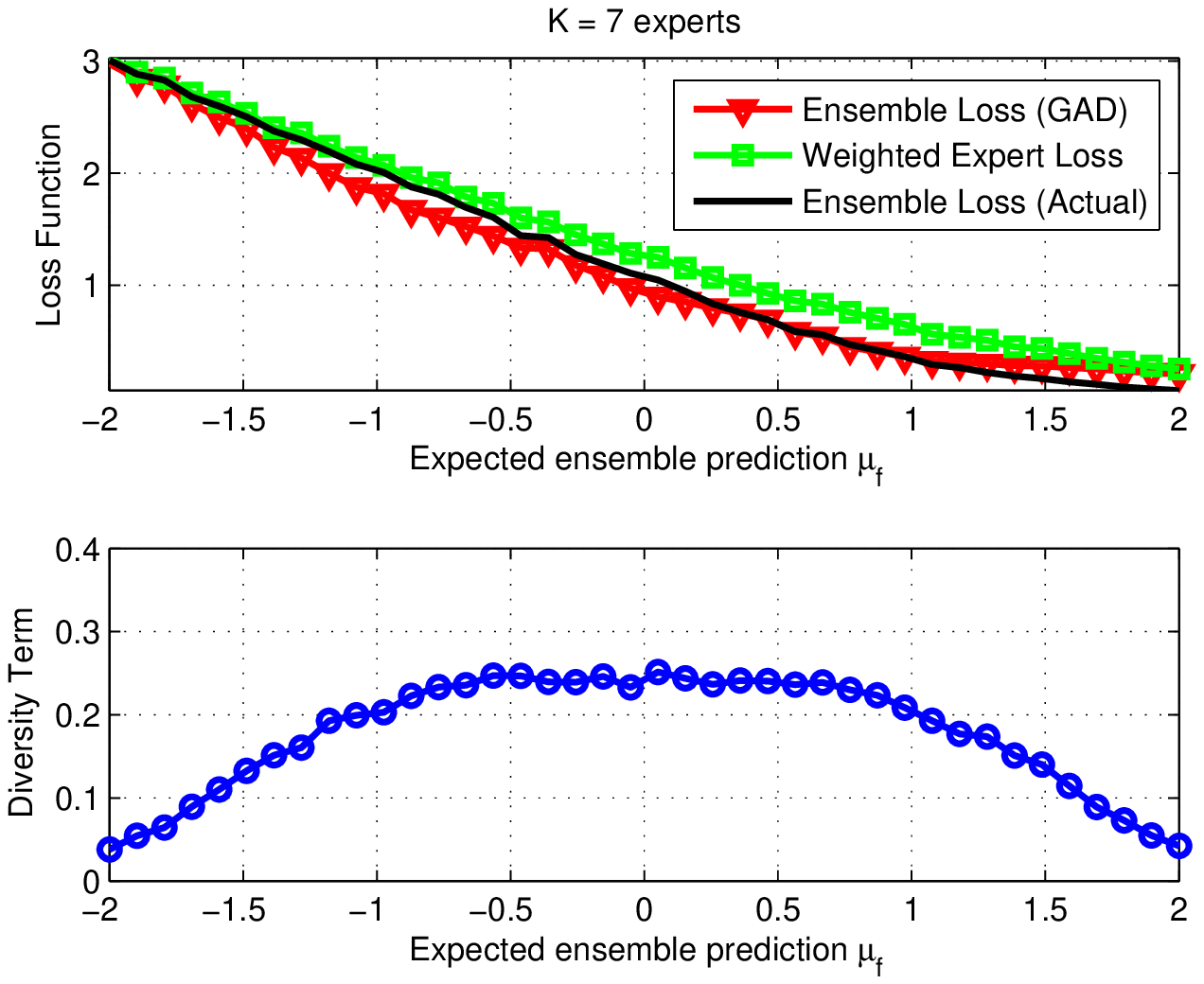}}
 }
	\caption{The top plot in each figure shows the median actual ensemble loss, its GAD approximation and weighted expert loss across $1000$ Monte Carlo samples in an ensemble of $K=3$ and $K=7$ experts for the smooth hinge loss function as a function of expected ensemble prediction $\mu_f$. We used $\sigma_f^2 = 2$ and $\epsilon = 0.5$. $Y = 1$ is the correct label. We also show the median diversity term for the same setup in the bottom plot.}
 \label{fig:hin_loss_decomp}
\end{figure}

\subsection{Accuracy of GAD-Motivated Approximation of Ensemble Loss}
In this subsection, we analyze the error of the approximate GAD ensemble loss $l_\text{GAD}(Y,f)$ in terms of absolute deviation from the true ensemble loss $l(Y,f)$. We also investigate the behavior of the bound on the approximation error $|l(Y,f) - l_\text{GAD}(Y,f)|$ presented in Corollary~\ref{cor:gad}. We used the same experimental setup for simulations as in the previous section. Figures~\ref{fig:abs_approx_err}-\ref{fig:hin_approx_err} show the plots of the approximation error using $l_\text{GAD}(Y,f)$ and the weighted expert loss $l_\text{WGT}(Y,f)$ for various loss functions discussed previously. We did not consider the squared error loss function because GAD reduces to AD and we get $0$ absolute error.

Figures~\ref{fig:abs_approx_err}-\ref{fig:hin_approx_err} show that the GAD approximation $l_\text{GAD}(Y,f)$ (red curve) always provides significantly lower approximation error rate than the weighted expert loss $l_\text{WGT}(Y,f)$ (green curve) when $\mu_f$ is close to the true label $Y = 1$. This is because the second order Taylor series expansion used in the GAD theorem's proof is accurate when the expert predictions are close to the true label. We also note that the bound on the approximation error $|l(Y,f) - l_\text{GAD}(Y,f)|$ (blue curve) follows the general trend of the error but is not very tight.

\begin{figure}[h]
 \centering
 \mbox{
 \subfigure{\includegraphics[width=0.5\textwidth]{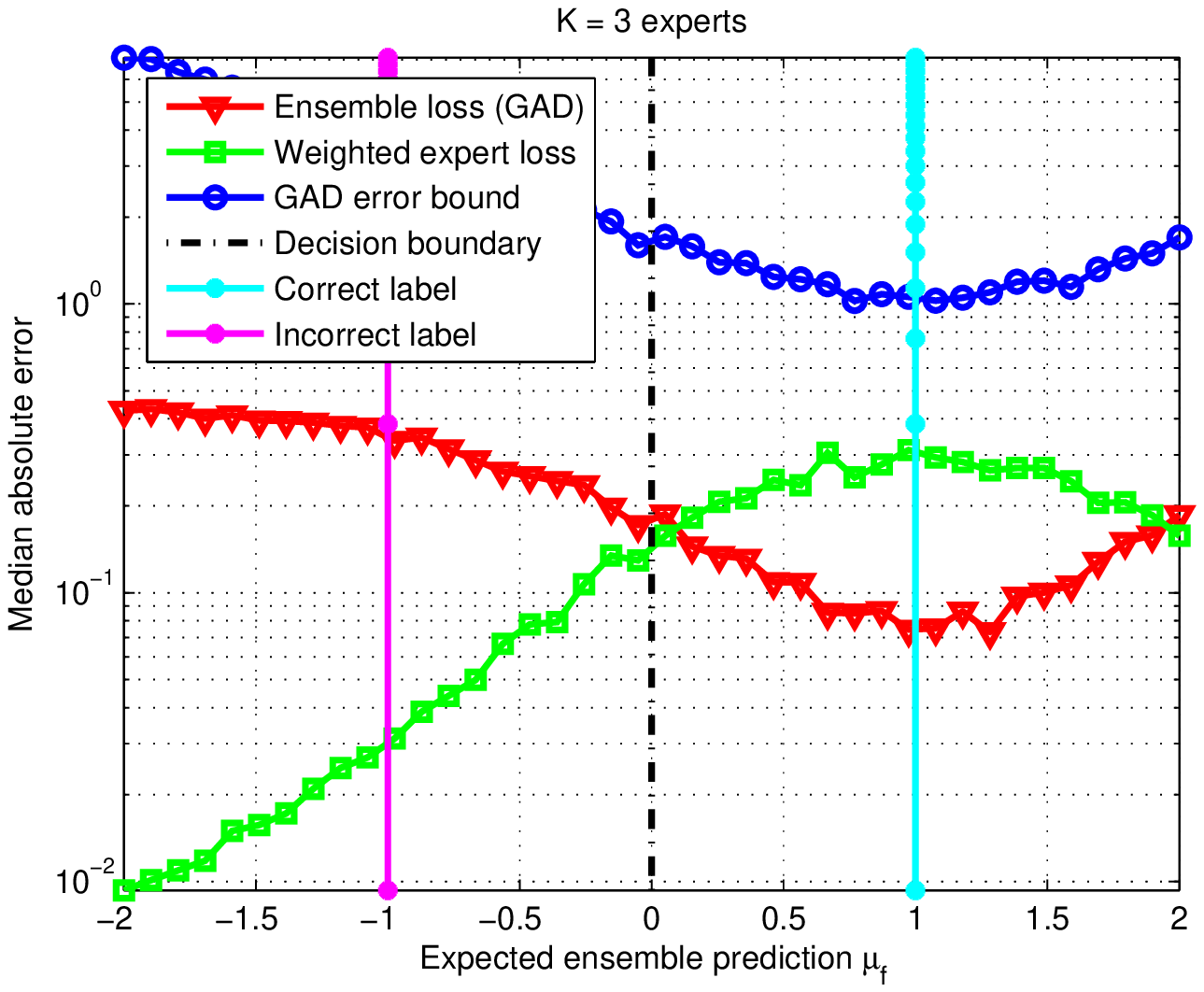}}
 \quad
 \subfigure{\includegraphics[width=0.5\textwidth]{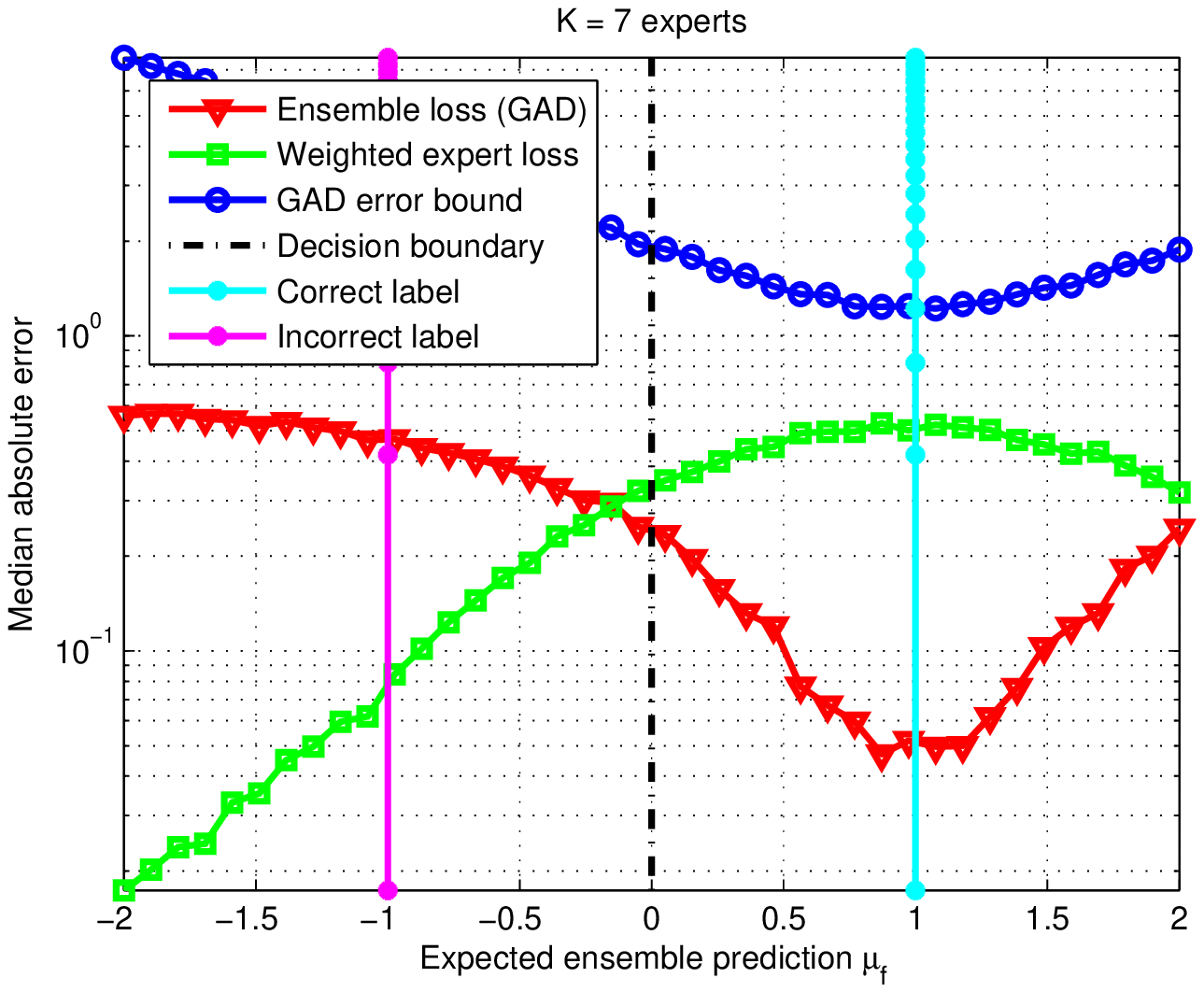}}
 }
 \caption{Median absolute approximation error and error bound across $1000$ Monte Carlo samples in an ensemble of $K=3$ and $K=7$ experts for smooth absolute error loss function as a function of expected ensemble prediction $\mu_f$. We used $\sigma_f^2 = 2$, $\epsilon = 0.5$ and $Y = 1$ as the correct label.}
 \label{fig:abs_approx_err}
\end{figure}

\begin{figure}[h]
 \centering
 \mbox{
 \subfigure{\includegraphics[width=0.5\textwidth]{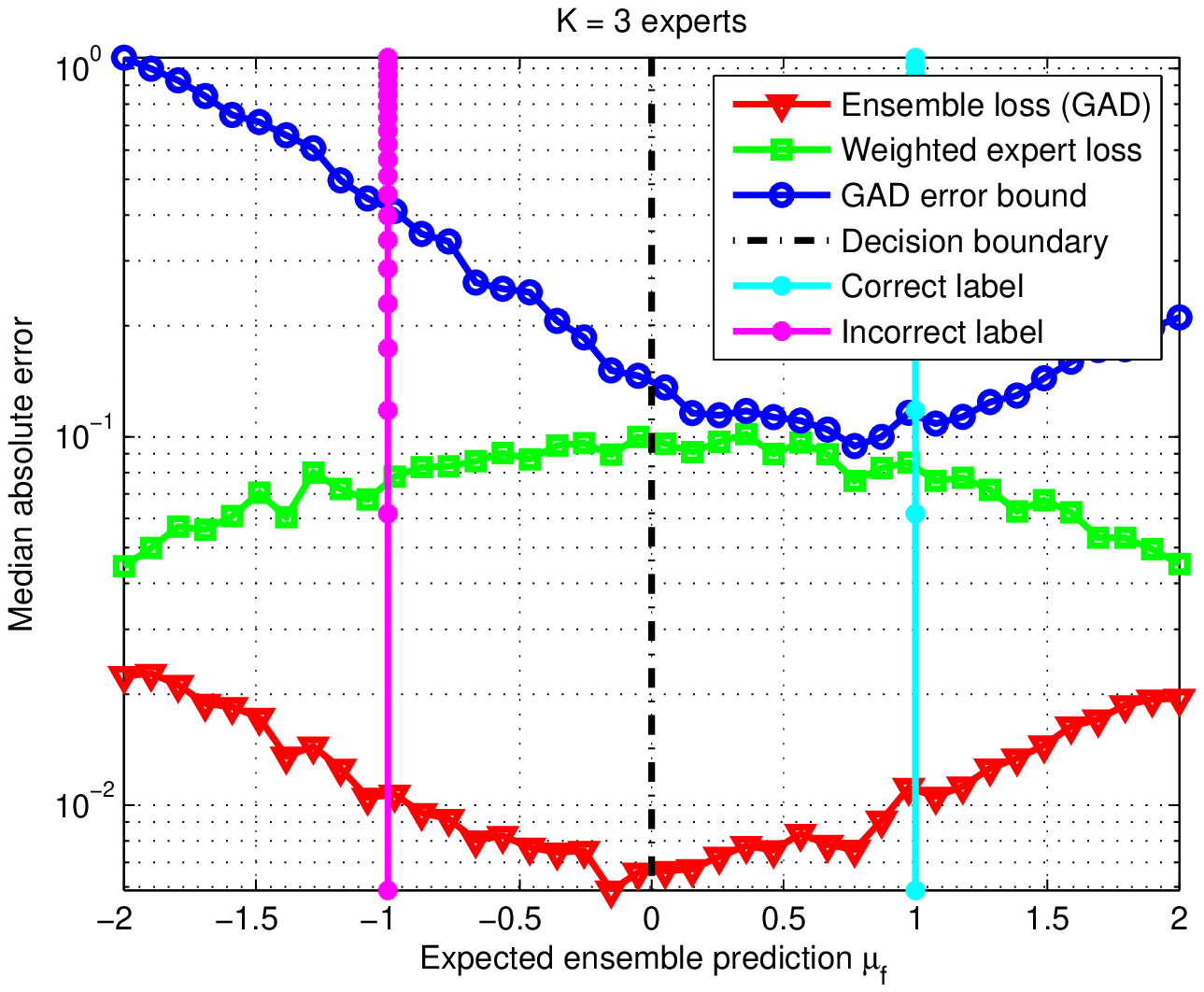}}
 \quad
 \subfigure{\includegraphics[width=0.5\textwidth]{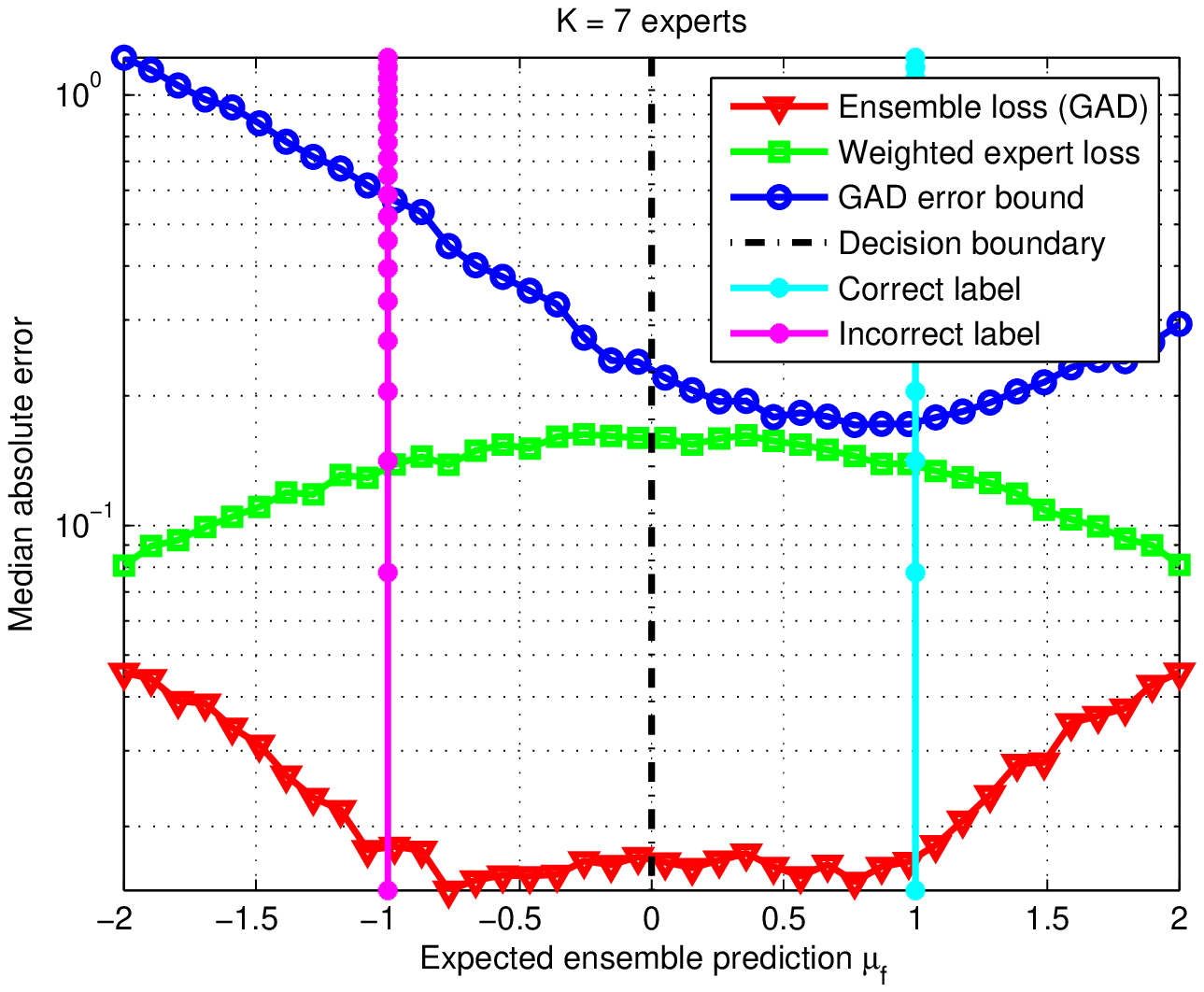}}
 }
 \caption{Median absolute approximation error and error bound across $1000$ Monte Carlo samples in an ensemble of $K=3$ and $K=7$ experts for logistic loss function as a function of expected ensemble prediction $\mu_f$. We used $\sigma_f^2 = 2$ and $Y = 1$ as the correct label.}
 \label{fig:log_approx_err}
\end{figure}

\begin{figure}[h]
 \centering
 \mbox{
 \subfigure{\includegraphics[width=0.5\textwidth]{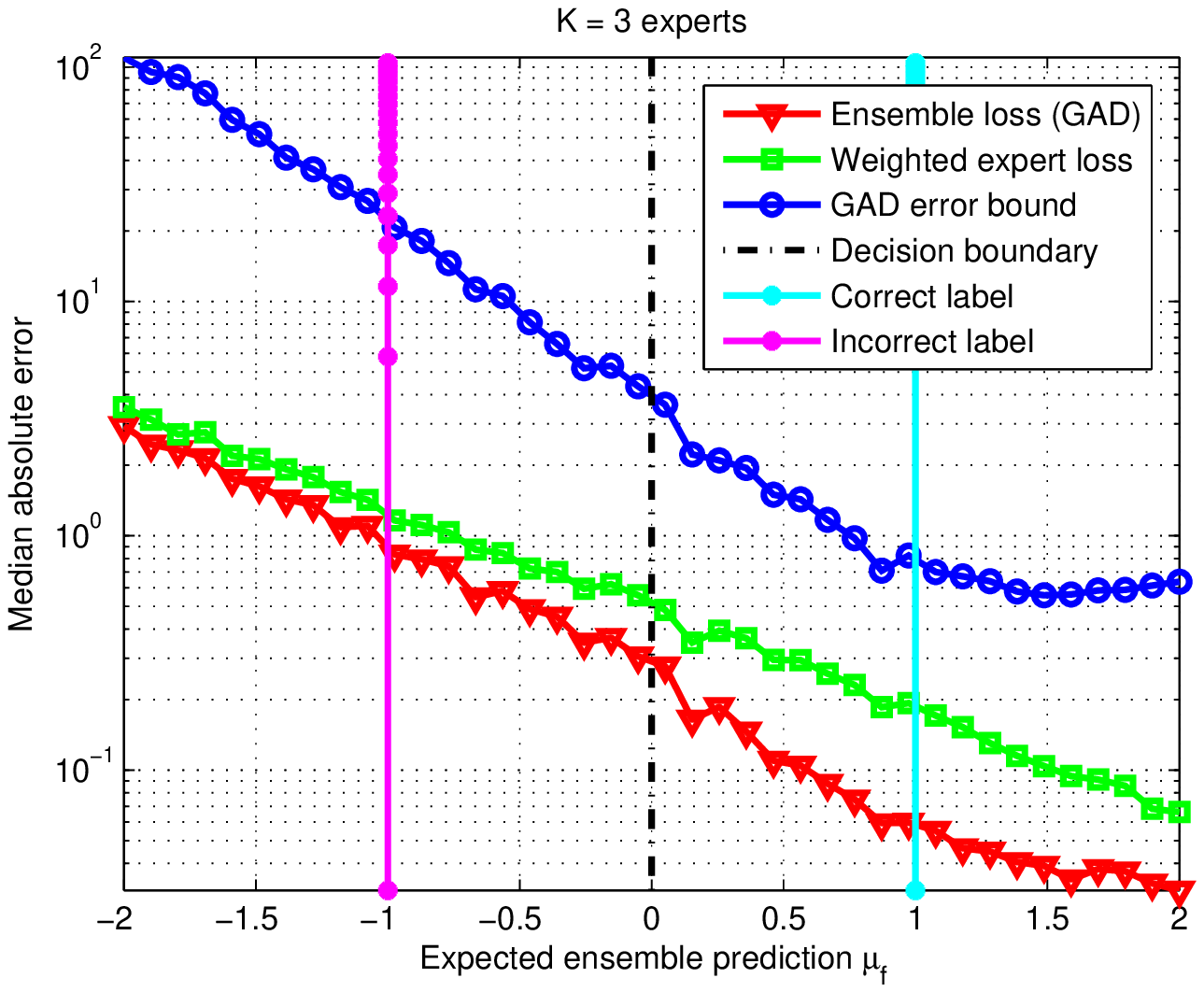}}
 \quad
 \subfigure{\includegraphics[width=0.5\textwidth]{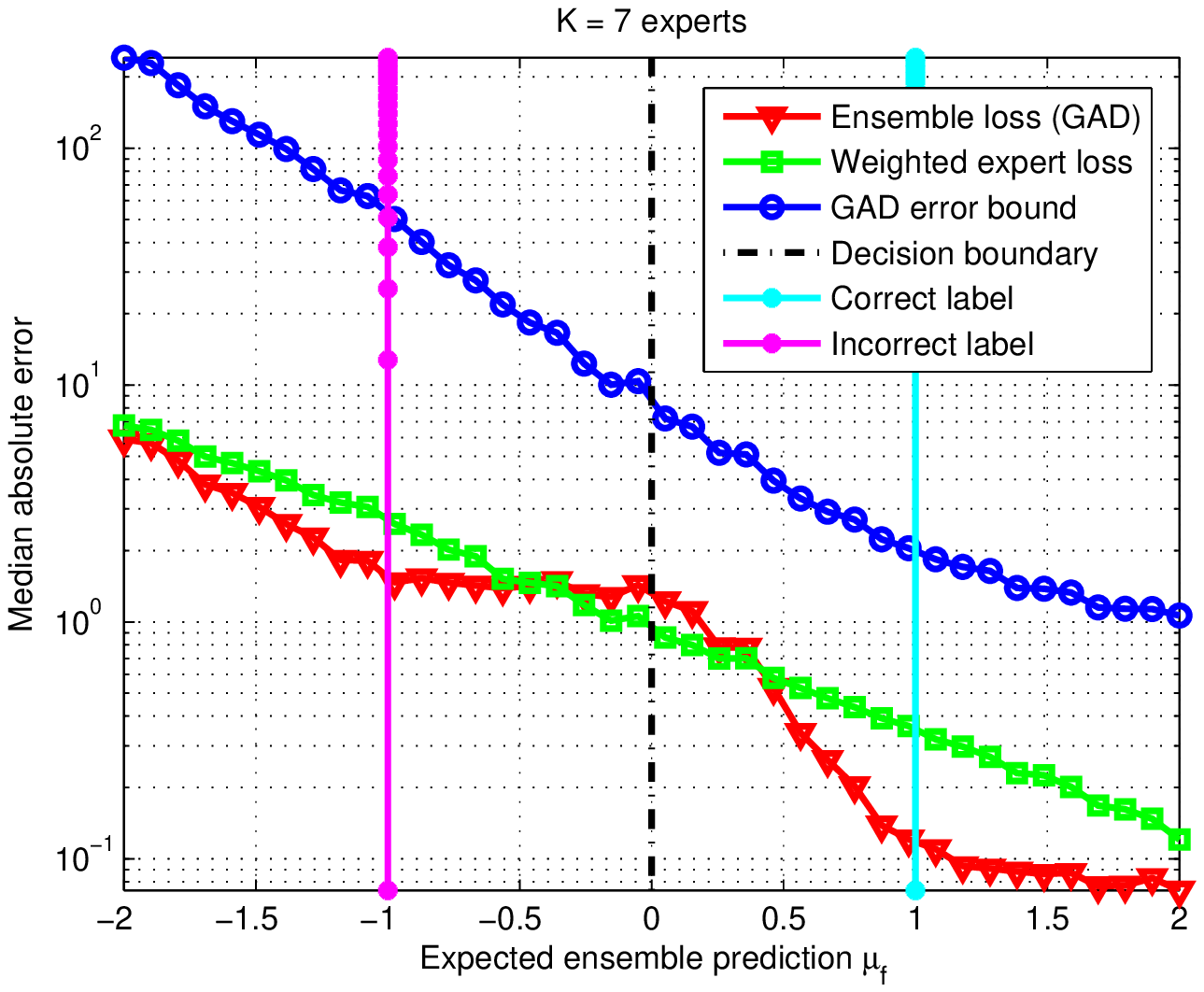}}
 }
 \caption{Median absolute approximation error and error bound across $1000$ Monte Carlo samples in an ensemble of $K=3$ and $K=7$ experts for exponential loss function as a function of expected ensemble prediction $\mu_f$. We used $\sigma_f^2 = 2$ and $Y = 1$ as the correct label.}
 \label{fig:exp_approx_err}
\end{figure}

\begin{figure}[h]
 \centering
 \mbox{
 \subfigure{\includegraphics[width=0.5\textwidth]{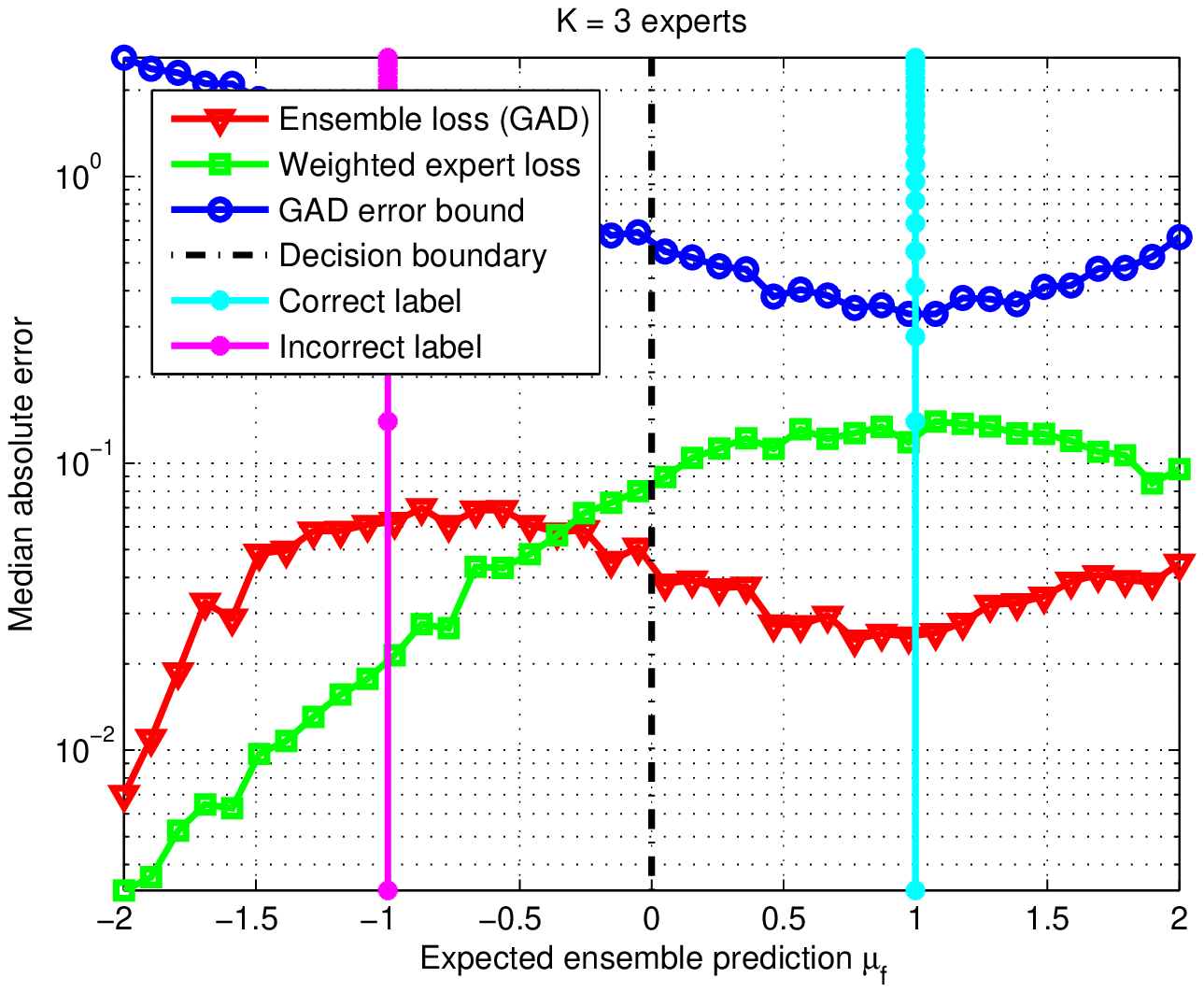}}
 \quad
 \subfigure{\includegraphics[width=0.5\textwidth]{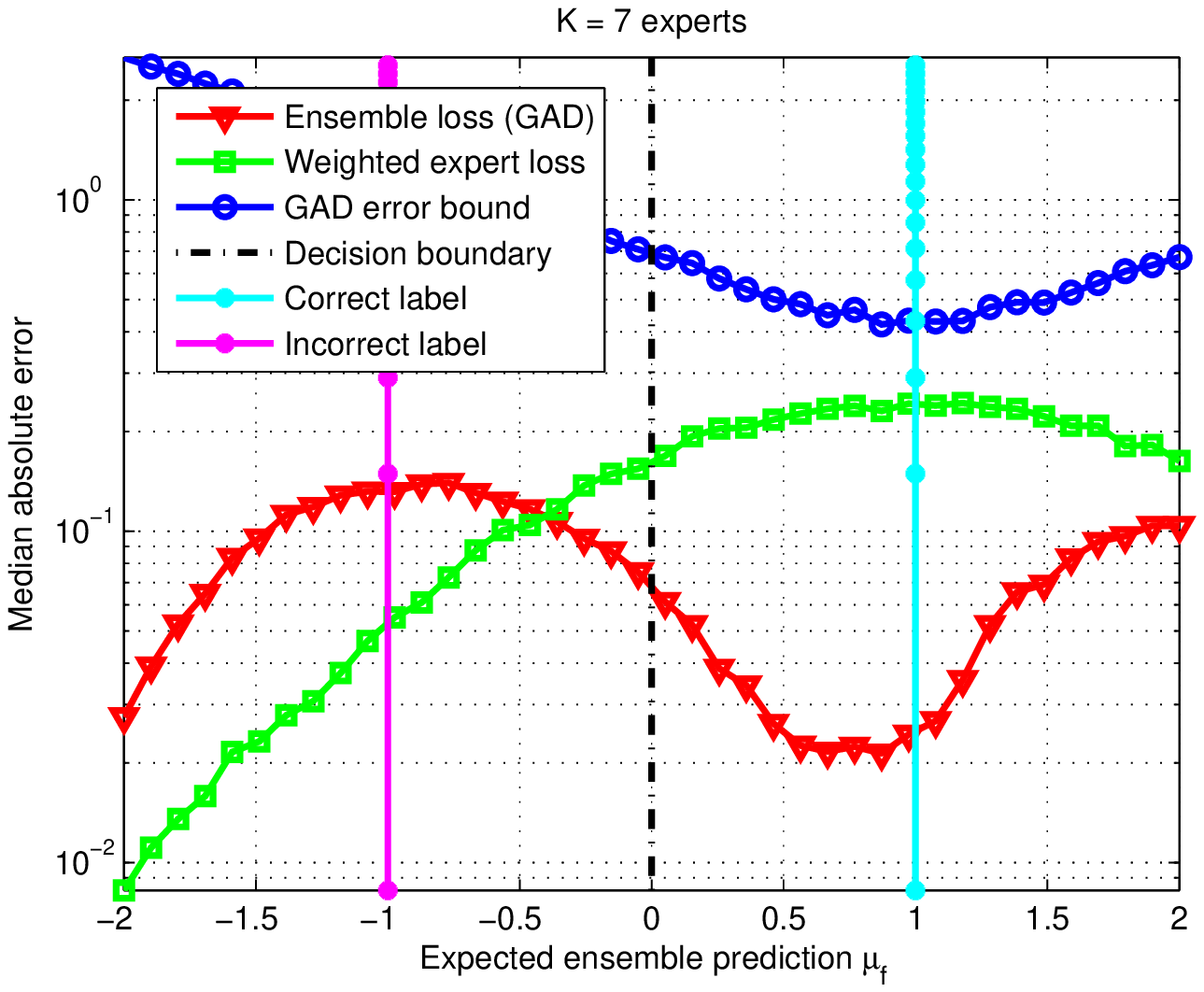}}
 }
 \caption{Median absolute approximation error and error bound across $1000$ Monte Carlo samples in an ensemble of $K=3$ and $K=7$ experts for smooth hinge loss function as a function of expected ensemble prediction $\mu_f$. We used $\sigma_f^2 = 2$, $\epsilon = 0.5$, and $Y = 1$ as the correct label.}
 \label{fig:hin_approx_err}
\end{figure}

\subsection{Comparison with Loss Function Approximation Used in Gradient Boosting}
Gradient boosting~\cite{friedman2001greedy} is a popular machine learning algorithm which sequentially trains an ensemble of base learners. Gradient boosting also utilizes a Taylor series expansion for its sequential training. Hence, we devote this subsection to understanding the differences between the loss function approximation used in gradient boosting and GAD.

Consider an ensemble of $K-1$ experts $f_k$, and their linear combination
\begin{align}
	g &= \sum_{k=1}^{K-1} v_k f_k \;
\end{align}
to generate the ensemble prediction $g$. Gradient boosting does not require the coefficients $\{v_k\}_{k=1}^{K-1}$ to be convex weights. Now if we add a new expert $f_K$ to $g$ with a weight $v_K$, the loss of the new ensemble becomes $l(Y,g + v_K f_K)$. Since gradient boosting estimates $v_K$ and $f_K$ given estimates of $\{v_k,f_k\}_{k=1}^{K-1}$, it assumes that $v_K f_K$ is close to $0$. In other words, it assumes that the new base learner $f_K$ is weak and contributes only that information which has not been learned by the current ensemble. The new ensemble's loss is therefore approximated by using a Taylor series expansion around $v_Kf_K = 0$. Assuming the loss function to be convex, we can write the following first order Taylor series expansion:
\begin{align}\label{eq:gboost}
	l(Y,g + v_K f_K) &\leq l(Y,g) + v_K f_K l'(Y,g) = l_\text{GB}(Y,f) \;.
\end{align}
Minimizing the above upper bound with respect to $v_K$ and $f_K$ is equivalent to minimizing $v_K f_K l'(Y,g)$, or maximizing the correlation between $v_K f_K$ and the negative loss function gradient $-l'(Y,g)$. This is the central idea used in training an ensemble using gradient boosting.

The above Taylor series expansion highlights the key differences between gradient boosting and GAD. First, the loss function upper bound used in gradient boosting is a means to perform sequential training of an ensemble of weak experts. Each new expert adds only incremental information to the ensemble, but is insufficiently trained to predict the target variables on its own. This reduces the utility of the above approximation in (\ref{eq:gboost}) in situations when the individual experts are themselves strong. This arises when, for example, the experts have been trained on different feature sets, data sets, utilize different functional forms, or have not been trained using gradient boosting. Second, the GAD approximation $l_\text{GAD}(Y,f)$ provides an intuitive decomposition of the ensemble loss into the weighted expert loss $l_\text{WGT}(Y,f)$ and the diversity $d(f_1,\ldots,f_K)$ which measures the spread of the expert predictions about $f$. Gradient boosting does not offer such an intuitive decomposition.

Figure~\ref{fig:grad_boost_compare_var0p1} shows the median approximation error for $l_\text{GAD}(Y,f)$ and $l_\text{GB}(Y,f)$ using the exponential loss function. We observe that gradient boosting has minimum error when the ensemble mean $\mu_f$ is near the decision boundary because $f_K \approx 0$. However, the approximation becomes poor as we move away from the decision boundary. $l_\text{GAD}(Y,f)$ provides a good approximation around the true label $Y=1$ as noted in the previous section. Thus the two loss functions $l_\text{GAD}(Y,f)$ and $l_\text{GB}(Y,f)$ provide complementary regions of low approximation error. A similar trend is observed for the other loss functions as well. 

\begin{figure}[h]
 \centering
 \mbox{
 \subfigure{\includegraphics[width=0.5\textwidth]{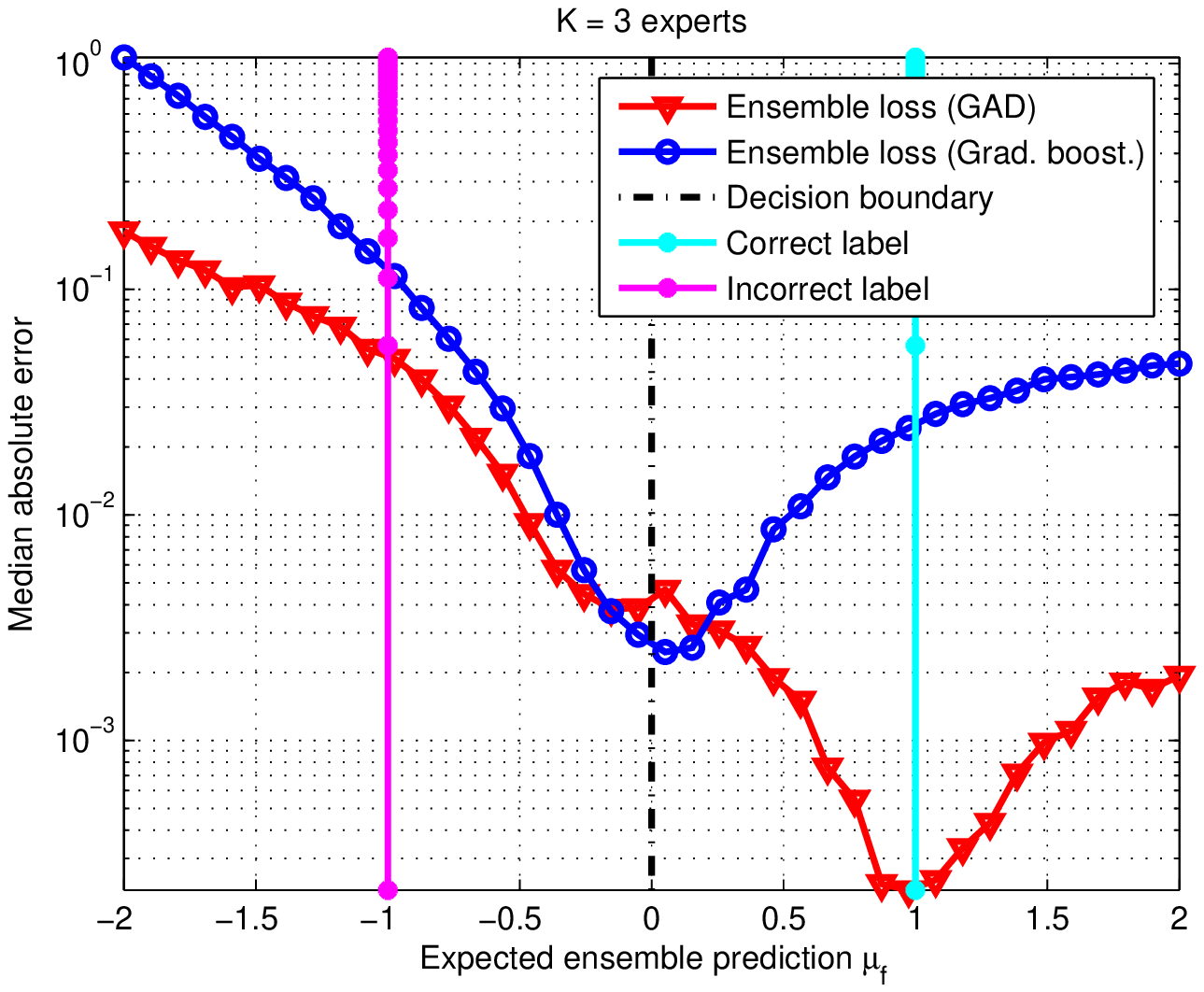}}
 \quad
 \subfigure{\includegraphics[width=0.5\textwidth]{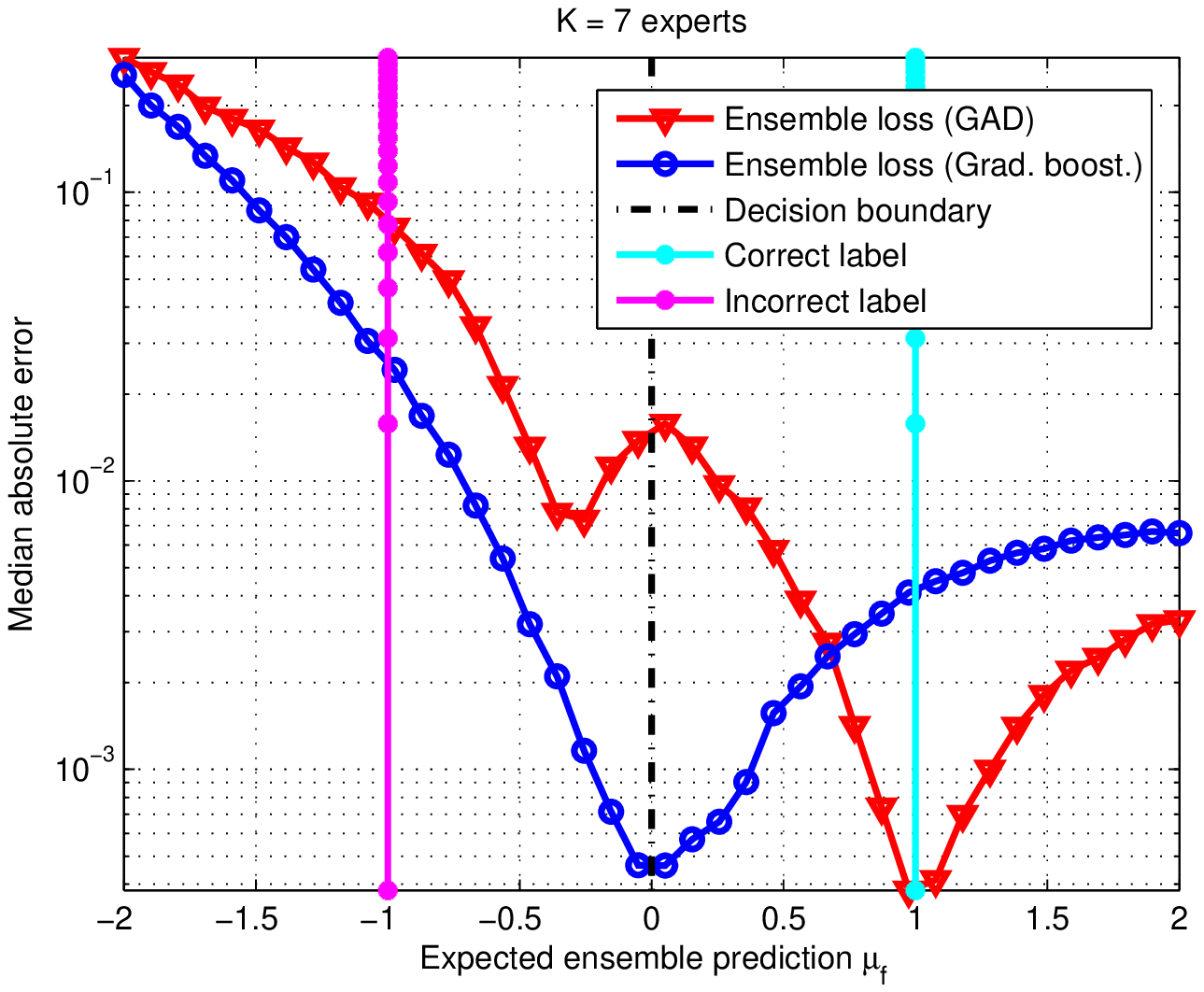}}
 }
 \caption{Median absolute approximation error for GAD and the gradient boosting upper bound across $1000$ Monte Carlo samples in an ensemble of $K=3$ and $K=7$ experts for smooth hinge loss function as a function of expected ensemble prediction $\mu_f$. We used $\sigma_f^2 = 0.1$ and $Y = 1$ as the correct label.}
 \label{fig:grad_boost_compare_var0p1}
\end{figure}

\section{Experiments on GAD with Standard Machine Learning Tasks}\label{sec:expts}
The previous sections presented empirical analysis of the GAD theorem based on simulations. This section presents experiments on some real-world data sets which will reveal the utility of the GAD theorem to machine learning problems of interest. We used five data sets from the UCI Machine Learning Repository~\cite{uci} as listed in Table~\ref{tab:uci_datasets} for the experiments. We conducted two sets of experiments using the UCI data sets. These experiments mimic common scenarios usually encountered by researchers while training systems with multiple classifiers or regressors. 

The first class of experiments tries to understand diversity and its impact on ensemble performance in case the ensemble consists of different classifiers and regressors trained on the same data set. We trained $3$ classifiers and $3$ regressors for each data set. We used logistic regression, linear support vector machine (SVM) from the Lib-Linear toolkit~\cite{liblinear}, and a homoscedastic linear discriminant analysis (LDA)-based classifier from Matlab for classification. The three linear regressors were trained by minimizing least squares, least absolute deviation, and the Huber loss function. GAD was used to analyse the diversity of the trained classifiers and regressors for each data set. The second experiment considers the situations where the experts are trained on potentially overlapping subsets of instances from a given data set. We used bagging~\cite{breiman1996bagging} for creating the multiple training subsets by sampling instances with replacement. The classifiers and regressors mentioned above were used for these experiments one at a time. 

The evaluation metric of the above experiments is the relative approximation error between the true loss and its approximation:
\begin{align}
	E_x&= \Bigg{|}1 - \frac{l_\text{x}(Y,f(X))}{l(Y,f(X))}\Bigg{|}
\end{align}
where $x$ is one of $\text{GAD}$, $\text{WGT}$ (using weighted loss of ensemble (\ref{eq:wgt_exp_loss})), and $\text{GB}$ (using approximation used in gradient boosting (\ref{eq:gboost})). We assigned equal weights \{$w_k$\} to the experts in all experiments.

Table~\ref{tab:approx_perf_case1} shows the relative absolute error for various classification data sets and loss functions when different experts were trained on the same data set. We observe that the GAD approximation provides the lowest median absolute error for all cases. This result is statistically significant at $\alpha = 0.01$ level using the paired t-test. It is often an order of magnitude better than the other two approximations. Table~\ref{tab:approx_perf_case2} shows the relative absolute error when one expert was trained on three versions of the same data set created by sampling with replacement. We used logistic regression for the classification loss functions and least squares linear regression for the regression loss functions. Table~\ref{tab:approx_perf_case2} shows that the GAD approximation gives the lowest error for all cases except for the Wine Quality data set with smooth absolute error loss function.

\begin{table}
	\centering
		\begin{tabular}{|c|c|c|c|}
		\hline
		Data set & Target set & No. of instances & No. of features \\ \hline \hline
		Magic Gamma Telescope~\cite{magic_gamma} & $\{-1,1\}$ & 19020 & 10 \\ \hline
		Pima Indians Diabetes~\cite{pima_indians} & $\{-1,1\}$ & 768 & 8 \\ \hline
		Abalone~\cite{abalone} & $\{-1,1\}, \mathbb{Z}^+$ & 4177 & 7 \\ \hline
		Parkinson's Disease~\cite{parkinsons} & $\mathbb{R}^+$ & 5875 & 20 \\ \hline
		Wine Quality~\cite{wine} & $\{0,\ldots,10\}$ & 6497 & 11 \\ \hline			
		\end{tabular}
	\caption{Description of various UCI Machine Learning Repository data sets used for experiments on GAD. All data sets with $\{-1,1\}$ as the target set were used for training binary classifiers. Others were used for training regressors. Abalone was used for binary classification as well by first thresholding the target variable at $10$.}
	\label{tab:uci_datasets}
\end{table}

\begin{table}
	\centering
		\begin{tabular}{|c|c|c|c|}
		\hline
	  Data set & $E_\text{GAD}$ & $E_\text{WGT}$ & $E_\text{GB}$ \\ \hline \hline
		\multicolumn{4}{|c|}{Logistic Loss} \\ \hline
		Magic Gamma Telescope~\cite{magic_gamma} & \textbf{1.3e-2} & 6.5e-2 & 6.6e-2 \\ \hline
		Pima Indians Diabetes~\cite{pima_indians} & \textbf{7.0e-3} & 2.9e-2 & 2.3e-2 \\ \hline
		Abalone~\cite{abalone} & \textbf{9.0e-3} & 4.4e-1 & 4.2e-1 \\ \hline \hline
		\multicolumn{4}{|c|}{Exponential Loss} \\ \hline
		Magic Gamma Telescope~\cite{magic_gamma} & \textbf{3.5e-2} & 1.3e-1 & 1.4e-1 \\ \hline
		Pima Indians Diabetes~\cite{pima_indians} & \textbf{1.5e-2} & 6.8e-2 & 6.1e-2 \\ \hline
		Abalone~\cite{abalone} & \textbf{2.2e-2} & 9.2e-2 & 9.2e-2 \\ \hline \hline
		\multicolumn{4}{|c|}{Smooth Hinge Loss ($\epsilon = 0.5$)} \\ \hline
		Magic Gamma Telescope~\cite{magic_gamma} & \textbf{2.2e-2} & 1.6e-1 & 1.4e-1 \\ \hline
		Pima Indians Diabetes~\cite{pima_indians} & \textbf{7.0e-3} & 4.9e-2 & 3.1e-2 \\ \hline
		Abalone~\cite{abalone} & \textbf{1.2e-2} & 8.9e-2 & 6.8e-2 \\ \hline \hline
		\multicolumn{4}{|c|}{Squared Error Loss} \\ \hline 
		Parkinson's Disease~\cite{parkinsons} & \textbf{0} & 1.7e-1 & 1.7 \\ \hline
		Wine Quality~\cite{wine} & \textbf{0} & 1.9e-1 & 1.5e-1 \\ \hline
		Abalone~\cite{abalone} & \textbf{0} & 2.8e-2 & 1.4e-1 \\ \hline  \hline
		\multicolumn{4}{|c|}{Smooth Absolute Error Loss ($\epsilon = 0.5$)} \\ \hline
		Parkinson's Disease~\cite{parkinsons} & \textbf{1e-2} & 1.4e-1 & 1.3 \\ \hline
		Wine Quality~\cite{wine} & \textbf{9.1e-2} & \textbf{9e-2} & 9.9e-2 \\ \hline
		Abalone~\cite{abalone} & \textbf{7e-3} & 1.9e-2 & 9.9e-2 \\ \hline
		\end{tabular}
	\caption{This table shows the relative absolute error $E_\text{x}$ for various UCI data sets between the ensemble loss and approximation x which is one of GAD, WGT, and GB corresponding to GAD, weighted sum of expert loses, and gradient boosting upper-bound on total loss. The first three loses used an ensemble of three classifiers - logistic regression, linear support vector machine, and Fisher's linear discriminant analysis classifiers. The two last two regressors used three regressors obtained by minimizing squared error, absolute error, and Huber loss function. The GAD approximation has significantly lower error than the other approximations for all cases except the Wine quality data set for the smooth absolute loss.}
	\label{tab:approx_perf_case1}
\end{table}

\begin{table}
	\centering
		\begin{tabular}{|c|c|c|c|}
		\hline
	  Data set & $E_\text{GAD}$ & $E_\text{WGT}$ & $E_\text{GB}$ \\ \hline \hline
		\multicolumn{4}{|c|}{Logistic Loss} \\ \hline
		Magic Gamma Telescope~\cite{magic_gamma} & \textbf{1.76e-4} & 5.38e-4 & 1.07e-1 \\ \hline
		Pima Indians Diabetes~\cite{pima_indians} & \textbf{4.02e-3} & 1.86e-2 & 3.62e-2 \\ \hline
		Abalone~\cite{abalone} & \textbf{1.24e-3} & 3.91e-3 & 5.54e-2 \\ \hline \hline
		\multicolumn{4}{|c|}{Exponential Loss} \\ \hline
		Magic Gamma Telescope~\cite{magic_gamma} & \textbf{4.03e-4} & 9.92e-4 & 1.77e-1 \\ \hline
		Pima Indians Diabetes~\cite{pima_indians} & \textbf{4.22e-3} & 1.61e-2 & 7.19e-2 \\ \hline
		Abalone~\cite{abalone} & \textbf{1.49e-3} & 4.09e-3 & 9.70e-2 \\ \hline \hline
		\multicolumn{4}{|c|}{Smooth Hinge Loss ($\epsilon = 0.5$)} \\ \hline
		Magic Gamma Telescope~\cite{magic_gamma} & \textbf{8.93e-4} & 1.89e-3 & 3.31e-1 \\ \hline
		Pima Indians Diabetes~\cite{pima_indians} & \textbf{1.26e-2} & 4.93e-2 & 5.93e-2 \\ \hline
		Abalone~\cite{abalone} & \textbf{2.14e-3} & 6.18e-3 & 1.46e-1 \\ \hline \hline
		\multicolumn{4}{|c|}{Squared Error Loss} \\ \hline 
		Parkinson's Disease~\cite{parkinsons} & \textbf{0} & 5.5e-3 & 1.6 \\ \hline
		Wine Quality~\cite{wine} & \textbf{0} & 1.5e-2 & 8.7e-2 \\ \hline
		Abalone~\cite{abalone} & \textbf{0} & 9.3e-3 & 1.5e-1 \\ \hline  \hline
		\multicolumn{4}{|c|}{Smooth Absolute Error Loss ($\epsilon = 0.5$)} \\ \hline
		Parkinson's Disease~\cite{parkinsons} & \textbf{5.3e-4} & 7.4e-3 & 1.2 \\ \hline
		Wine Quality~\cite{wine} & 1.2e-2 & \textbf{4.4e-3} & 6.0e-2 \\ \hline
		Abalone~\cite{abalone} & \textbf{1.4e-3} & 3.4e-3 & 8.2e-2 \\ \hline
		\end{tabular}
	\caption{This table shows the relative absolute error $E_\text{x}$ for various UCI data sets between the ensemble loss and approximation x which is one of GAD, WGT, and GB corresponding to GAD, weighted sum of expert loses, and gradient boosting upper-bound on total loss. The first three loses used logistic regression classifiers trained on three data sets created by sampling with replacement (bagged training sets). The last two loses used a linear regressor obtained by minimizing squared error and trained on 3 bagged training sets. The GAD approximation has significantly lower error than the other approximations for all cases except the Wine quality data set for the smooth absolute loss.}
	\label{tab:approx_perf_case2}
\end{table}

These experiments indicate that $l_\text{GAD}(Y,f)$ provides an accurate approximation of the actual ensemble loss $l(Y,f)$. This adds value to the proposed approximation for designing supervised machine learning algorithms, in addition to providing an intuitive definition and explanation of the impact of diversity on ensemble performance.

\section{Conclusion and Future Work}\label{sec:concl}
We presented the generalized ambiguity decomposition (GAD) theorem which explains the link between the diversity of experts in an ensemble and the ensemble's overall performance. The GAD theorem applies to a convex ensemble of arbitrary experts with a second order differentiable loss function. It also provides a data-dependent and loss function-dependent definition of diversity. We applied this theorem to some commonly used classification and regression loss functions and provided a simulation-based analysis of diversity term and accuracy of the resulting loss function approximation. We also presented results on many UCI data sets for two frequently encountered situations using ensembles of experts. These results demonstrate the utility of the proposed decomposition to ensembles used in these real-world problems.

Future work should design supervised learning and ensemble selection algorithms utilizing the proposed GAD theorem. Such algorithms might extend existing work on training diverse ensembles of neural networks using negative correlation learning~\cite{liu1999ensemble} and conditional maximum entropy models~\cite{audhkhasi2012creating}. The GAD loss function approximation is especially attractive because the diversity term does not require labeled data for computation. This opens the possibility of developing semi-supervised learning algorithms which use large amounts of unlabeled data.

Another interesting research direction is understanding the impact of diversity introduced at various stages of a conventional supervised learning algorithm on the final ensemble performance. The individual experts can be trained on different data sets and can use different feature sets. It would be useful to understand the most beneficial ways in which diversity can be introduced in the ensemble. We would also
like to study the impact of diversity in sequential classifiers such as automatic speech recognition
(ASR) systems. Our recent work~\cite{audhkhasi13theoretical} develops a GAD-like framework for 
theoretically analyzing the impact of diversity on fusion performance of state-of-the-art ASR systems.
Finally, characterizing diversity in an ensembles of human experts presents a tougher challenge because it is difficult to quantify the underlying loss function. However, many real-world problems involving crowd-sourcing~\cite{snow08, callison09, ambati10, marge10, denkowski10, sorokin08, heer10, audhkhasi2011accurate, audhkasi2011reliability} and understanding human behavior involve annotation by multiple human experts~\cite{raykar10, yan10, welinder10, dawid79, wollmer08, audhkhasi2013globally, audhkhasi2011emotion, audhkhasi2010data}. Extending the GAD theorem to such cases will contribute significantly to these domains.

\bibliographystyle{IEEEbib}
\bibliography{gad_tpami_refs,kartik_refs}

\end{document}